\title{Massively Parallel Proof-Number Search for Impartial Games and Beyond}
\author{
    Tomáš Čížek\textsuperscript{\rm 1},
    Martin Balko\textsuperscript{\rm 1},
    Martin Schmid\textsuperscript{\rm 1,\rm 2}
}
\newtheorem{thm}{Theorem}
\definecolor{conwColor}{HTML}{ff666b}
\definecolor{mollColor}{HTML}{ff666b}
\definecolor{ajsColor}{HTML}{f89e54}
\definecolor{purrColor}{HTML}{f89e54}
\definecolor{glopColor}{HTML}{6db8c5}
\definecolor{newGlopColor}{HTML}{6db8c5}
\definecolor{spotsColor}{HTML}{4b8ce7}
\definecolor{pSpotsColor}{HTML}{4b8ce7}
\definecolor{maxcolor}{HTML}{E83F3A}
\newcommand{\conw}[3]{
    \cellcolor{conwColor!40}{#1}
    &
    \cellcolor{conwColor!40}{#2}
    &
    \cellcolor{conwColor!40}C
    &
    \cellcolor{conwColor!40}{#3}
}
\newcommand{\moll}[3]{
    \cellcolor{mollColor!25}{#1}
    &
    \cellcolor{mollColor!25}{#2}
    &
    \cellcolor{mollColor!25}M
    &
    \cellcolor{mollColor!25}{#3}
}
\newcommand{\ajs}[3]{
    \cellcolor{ajsColor!40}{#1}
    &
    \cellcolor{ajsColor!40}{#2}
    &
    \cellcolor{ajsColor!40}A
    &
    \cellcolor{ajsColor!40}{#3}
}
\newcommand{\purr}[3]{
    \cellcolor{purrColor!25}{#1}
    &
    \cellcolor{purrColor!25}{#2}
    &
    \cellcolor{purrColor!25}P
    &
    \cellcolor{purrColor!25}{#3}
}
\newcommand{\glop}[3]{
    \cellcolor{glopColor!25}{#1}
    &
    \cellcolor{glopColor!25}{#2}
    &
    \cellcolor{glopColor!25}G
    &
    \cellcolor{glopColor!25}{#3}
}
\newcommand{\glopMid}[3]{
    \cellcolor{newGlopColor!50}{#1}
    &
    \cellcolor{newGlopColor!50}{#2}
    &
    \cellcolor{newGlopColor!50}G
    &
    \cellcolor{newGlopColor!50}{#3}
}
\newcommand{\glopNew}[3]{
    \cellcolor{newGlopColor!50}{#1}
    &
    \cellcolor{newGlopColor!50}{#2}
    &
    \cellcolor{newGlopColor!50}G
    &
    \cellcolor{newGlopColor!50}{#3}
}
\newcommand{\spots}[3]{
    \cellcolor{spotsColor!25}{#1}
    &
    \cellcolor{spotsColor!25}{#2}
    &
    \cellcolor{spotsColor!25}S
    &
    \cellcolor{spotsColor!25}{#3}
}
\newcommand{\parallelspots}[3]{
    \cellcolor{pSpotsColor!40}{#1}
    &
    \cellcolor{pSpotsColor!40}{#2}
    &
    \cellcolor{pSpotsColor!40}S
    &
    \cellcolor{pSpotsColor!40}{#3}
}
\newcommand{\none}[1]{
    \cellcolor{white}{#1}
    &
    \multicolumn{3}{c|}{---}
}
\newcommand{\noneBorder}[1]{
    \cellcolor{white}{#1}
    &
    \multicolumn{3}{c|}{---}
}
\newcommand{\circledchar}[2]{%
  \raisebox{2pt}{
    \tikz[baseline={(base)}]{
      \def\radius{1.9mm}
      \path[draw=black, fill=#1, line width=0.5pt] (0,0) circle (\radius);
      \node[text=white, font=\rmfamily\bfseries\small, align=center] (base) at (0,0) {#2};
    }%
  }%
}
\newcommand{\halfcirclechar}[3]{%
  \tikz[baseline={(base)}]{
    \def\radius{1.9mm}
    \path[draw=black, fill=#1, line width=0.5pt] (0,0) circle (\radius);
    \path[fill=#2, draw=none]
      (0,\radius) arc[start angle=90,end angle=-90,radius=\radius] -- (0,-\radius) -- cycle;
    \path[draw=black, line width=0.5pt] (0,0) circle (\radius);
    \node[text=white, font=\rmfamily\bfseries\small, align=center] (base) at (0,-0.1) {#3};
  }
}
\newcommand{\sem}[1]{\scriptsize\textcolor{gray}{\,$\pm$ #1}}
\begin{document}

\maketitle

\begin{abstract}
Proof-Number Search is a best-first search algorithm with many successful applications, especially in game solving.
As large-scale computing clusters become increasingly accessible, parallelization is a natural way to accelerate computation.
However, existing parallel versions of Proof-Number Search are known to scale poorly on many CPU cores.
Using two parallelized levels and shared information among workers, we present the first massively parallel version of Proof-Number Search that scales efficiently even on a large number of CPUs.
We apply our solver, enhanced with Grundy numbers for reducing game trees of impartial games, to the Sprouts game, a case study motivated by the long-standing Sprouts Conjecture.
Our algorithm achieves 332.9$\times$ speedup on 1024 cores, significantly improving previous parallelizations and outperforming the state-of-the-art Sprouts solver GLOP by four orders of magnitude in runtime while generating proofs 1,000$\times$ more complex.
Despite exponential growth in game tree size, our solver verified the Sprouts Conjecture for 42 new positions, nearly doubling the number of known outcomes.
\end{abstract}

\begin{links}
    \link{Code}{https://github.com/cizektom/spots}
\end{links}

\section{Introduction}

Game-solving is a well-established and difficult task in artificial intelligence, which involves computing outcomes under perfect play of all players, often requiring a deep traversal of vast game trees. 
Despite these challenges, several classical games have already been solved, including Connect Four~\cite{allis1988connect4}, Gomoku~\cite{allis1996gomoku}, Checkers~\cite{schaeffer2007checkers}, and Othello~\cite{takizawa2024othello}.
\emph{Proof-Number Search} \cite{allis1994} is among the most successful game-solving algorithms.
It is a best-first tree search algorithm designed to compute game outcomes efficiently.
This algorithm has been widely used due to its ability to focus on the most promising parts of the game tree, making it particularly effective in domains with large branching factors or unbalanced trees.
Beyond games, its variants and related algorithms have been applied in various other settings, for example, in chemistry \cite{franz2022}, graphical models \cite{dechter2007}, medicine \cite{heifets2012}, and theorem proving \cite{lample2022}.

With increasing computational power, it is natural to pursue parallel versions of Proof-Number Search; yet, despite many efforts, existing variants have failed to scale efficiently across many CPUs.
Several studies have suggested using parallel Proof-Number Search or its variants in diverse domains \cite{franz2022,kishimoto2015}, but scalability remains a challenge.
These limitations led \citet{kishimoto2012gametree} to pose the problem of developing a well-scaling, massively parallel Proof-Number Search.

Many combinatorial games remain far from being solved, including Chess, Shogi, Go, Cram, or Sprouts.
In \emph{Sprouts}, two players alternately connect $n$ given spots in the plane according to simple rules, where the last player unable to make a move loses; see~\citet{cizek2021implementation} for the implementation.
We use this game as our experimental domain, as it features highly unbalanced game trees with large branching factors, making it well-suited for Proof-Number Search.
Designed by Conway and Paterson to resist computer analysis~\cite{roberts2015genius}, Sprouts poses a challenging benchmark: the complexity of its game tree surpasses Chess and Go for relatively small values of~$n$; see Figure~\ref{fig_tree_complexity}.
The computation of outcomes of Sprouts positions is further motivated by the long-standing \emph{Sprouts Conjecture} \cite{applegate1991computer}, which states that the \emph{$n$-spot position}, consisting of $n$ given spots, is winning for the first player if and only if $n$ is congruent to 3, 4, or 5 modulo 6.

\begin{figure}[!htb]
\centering
    \includegraphics[scale=0.87]{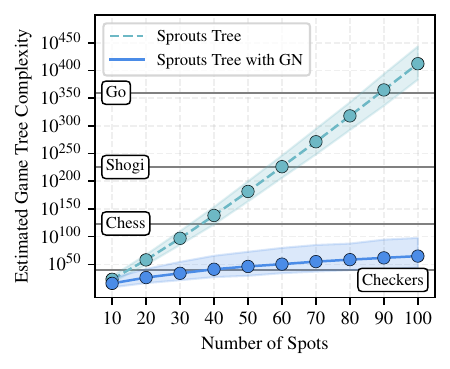}
\caption{Estimated game tree complexity of Sprouts (with and without Grundy numbers) compared to other games.}
\centering
\label{fig_tree_complexity}
\end{figure}

\subsection{Our Contributions}

We address the problem posed by \citet{kishimoto2012gametree} and develop a new variant of Proof-Number Search that scales efficiently on many CPUs. 
Building on this, we implement a parallel solver for combinatorial games.
Together with a new memory-efficient algorithm for impartial games, we achieve numerous new results for the game Sprouts.

\paragraph{Well-Scaling Parallel Proof-Number Search.}
We intro\-duce \emph{PNS-PDFPN}, a massively parallel Proof-Number Sea\-rch for distributed-memory systems combining two parallelized levels with shared information among workers. It achieves 208.6$\times$ speedup on 1024 cores over sequential DFPN, significantly outperforming prior methods, whose best scaling factor in the same setting was 21.3.
Using minimal domain knowledge, the speedup increases to 332.9$\times$.

\paragraph{DFPN Search with Grundy Numbers.}
Proof-Number Search is designed to solve problems that are represented as AND/OR trees.
We formalize the notion of game trees with Grundy numbers to reduce the complexity of game trees of the so-called impartial games, as introduced by \citet{lemoineViennot12Inevitable}.
We adapt \emph{Depth-First Proof-Number Search with Grundy numbers}, a popular and memory-efficient variant of Proof-Number Search, to work with such trees.

\paragraph{Solving Sprouts Positions.}
To demonstrate the efficiency of our new algorithms, we implemented a solver for the Sprouts game, and used it to determine many new outcomes.
Our solver outperforms the state-of-the-art solver \emph{GLOP} by \citet{lemoine2015nimber}, achieving a four-order-of-magnitude speedup and generating proofs 1,000 times more complex.
Despite the exponential growth of the game trees, our solver verifies the Sprouts Conjecture for 42 new positions, nearly doubling the number of known outcomes.

\subsection{Related Work}

A considerable amount of work has been devoted to designing parallel versions of Proof-Number Search. These include \emph{RP-PNS}, introduced by~\citet{saito2010parallel}, a parallel variant of DFPN by \citet{kaneko2010parallel}, and \emph{SPDFPN} algorithm by \citet{pawlewicz2013scalable}. All of these were designed for shared-memory systems, which typically offer only a limited number of CPU cores.
To leverage the greater computational power of clusters with many cores, algorithms tailored to distributed-memory systems are required. Such algorithms include \emph{ParaPDS} by \citet{kishimoto1999parallel}, \emph{Job-Level Proof-Number Search} by \citet{wu2010joblevel, wu2013joblevel}, and \emph{PPN\textsuperscript{2}} search by \citet{saffidine2012parallel}.
A distributed-memory parallel Proof-Number Search was also used by \citet{schaeffer2005solving,schaeffer2007checkers} in solving Checkers.

As noted by \citet{kishimoto2012gametree}, the scalability of existing algorithms was evaluated only on a relatively small number of cores, and, despite various successes in game solving, their efficiency declined rapidly as core counts increased. 
While parallel DFPN by \citet{kaneko2010parallel} and SPDFPN have speedups of 3.58 on 8 cores and 11.8 on 16 cores, respectively, they are limited to shared-memory systems.
The best scaling factors for distributed-memory systems are reported for PPN\textsuperscript{2} search, which achieved speedups of 7.2, 11.1, and 18.3 on 16, 32, and 64 cores, respectively.

Determining the outcomes of $n$-spot Sprouts positions has a long history dating back to the 1960s.
Conway himself determined the outcomes for $n \leq 3$, and later Mollison extended the results to
$n=6$ by hand~\cite{gardner1967mathematical}.
The first computer-based solver was developed by \citet{applegate1991computer}, who used it to prove outcomes for $n\leq 11$.
In 2006, Purinton created the program \emph{AuntBeast}, which solved positions up to 14 spots.
A major advance was made by the state-of-the-art solver GLOP \cite{lemoine2015nimber} who used Alpha-Beta Pruning combined with Grundy number theory to solve all positions for 
$n\leq 32$.
These results were later extended to all values of
$n \leq 44$ and selected cases up to $n=53$, by incorporating a basic variant of Proof-Number Search combined with Grundy numbers.
All known outcomes agree with the Sprouts Conjecture.

\section{Preliminaries}

We now briefly review relevant preliminaries from combinatorial game theory; for more details, see \citet{kishimoto2012gametree} and \citet{berlekamp2003winning}.
A \emph{combinatorial game} is a two-player, deterministic game with perfect information that ends in a finite number of moves.
Under \emph{normal play convention}, the last player able to move wins.
A combinatorial game is \emph{impartial} if the available moves from any given position are the same for both players, regardless of whose turn it is.  
Combinatorial games include Chess, Checkers, Go, Othello, and Shogi, whereas impartial games include Nim, Sprouts, Quarto, and Cram.

\subsection{Game Trees in Negamax Form}

Game trees of combinatorial games can be represented by the so-called \emph{AND/OR trees}.
Here, we instead work with AND/OR trees in negamax fashion using \emph{NAND trees}, which are easier to work with, as they eliminate the unnecessary distinction between the AND and OR nodes in subsequent definitions.
The nodes at odd and even levels of a NAND tree correspond to positions where the first and second player, respectively, is on the move.
In particular, the root corresponds to the initial position with the first player on the move.
A \emph{terminal} is a node with no children, which represents a position where the game ends.
In a partially expanded NAND tree, a non-terminal node is \emph{internal} if some of its children are generated, and a \emph{leaf} if none of them are.  

Each node is associated with one of the following \emph{values}: win, loss, or unknown.
The value of a node is \emph{unknown} if the outcome of the associated position is not implied by the currently expanded subtree of the node.
Otherwise, it is \emph{win} or \emph{loss}, depending on whether the player on the move at the associated position has a winning strategy or not.
Thus, the value of every terminal is loss in any combinatorial game under the normal play convention.
The value of an internal node is win if at least one of its children is loss, and it is loss if all its children are wins; otherwise, the value is unknown.
The value of a leaf is always unknown.
A node is \emph{proved} or \emph{disproved} if its value equals to win or loss, respectively.

Given a position $P$, the goal is to find a \emph{proof} of $P$, which is a NAND tree rooted in $P$ whose value is known.
These proofs correspond to the so-called \emph{weak solutions}, as they give the outcome of $P$ and a winning strategy for one of the players.
In contrast, \emph{strong solutions} provide outcomes and winning strategies for all positions reachable in the game.

\subsection{Proof-Number Search}

We first describe a basic variant \emph{PNS} of the Proof-Number Search for NAND trees.
In each step, PNS selects and expands a \emph{most proving node} (\emph{MPN})---a leaf in a currently expanded tree whose solution would contribute the most to the solution of the root.
To find an MPN, each node $v$ in a NAND tree is associated with a \emph{proof number} $pn(v) \in \mathbb{N}_0 \cup \{ \infty \}$ and a \emph{disproof number} $dn(v) \in \mathbb{N}_0 \cup \{ \infty \}$, representing the lower bounds on the minimum number of leaves in the subtree of $v$ that have to be solved to prove or disprove $v$, respectively.
Both these values are computed in a bottom-up manner.
For leaves, they are initialized as $pn(v) = dn(v) = 1$.
For terminals in games under the normal play convention, $pn(v) = \infty$ and $dn(v) = 0$.
The values $pn(v)$ and $dn(v)$ of an internal node $v$ are defined as $pn(v) = \min_{c}{dn(c)}$ and $dn(v) = \sum_{c}{pn(c)}$, where the minimum and the sum are taken over the children $c$ of~$v$.
At the start, the algorithm initializes $pn(r)$ and $dn(r)$ of the root $r$.
At each step, it then finds an MPN by descending from the root and always selecting the child with the lowest disproof number with ties broken arbitrarily.
PNS then expands the selected MPN by generating all its children $c$ and initializing their $pn(c)$ and $dn(c)$.
At the end of each step, it updates the proof and disproof numbers on the path back to the root.

\paragraph{Depth-First Proof-Number Search.}
One drawback of PNS is that it stores the entire expanded tree in memory, which is usually consumed very quickly.
Therefore, \citet{nagai1999phdthesis} introduced \emph{Depth-first Proof-Number Search} (\emph{DFPN}), a space-efficient
variant of PNS.

DFPN selects the MPN in a depth-first search manner.
Let $\mathcal{P}$ be a currently explored path consisting of internal nodes leading to the MPN.
The key idea of DFPN allows one to store only the nodes along $\mathcal{P}$ together with their children while preserving the properties of PNS and staying deep in the tree as long as the MPN is guaranteed to occur there.
To achieve that, for each node $v$ of $\mathcal{P}$, DFPN maintains two thresholds $pt(v) \in \mathbb{N}_0 \cup \{\infty\}$ and $dt(v) \in \mathbb{N}_0 \cup \{\infty\}$ for its proof and disproof numbers.
The MPN occurs in the subtree of $v$ if and only if $pn(v) < pt(v)$ and $dn(v) < dt(v)$, where $\infty < \infty$ is interpreted as false.
Otherwise, DFPN backtracks along $\mathcal{P}$ until it reaches $v'$ that satisfies $pn(v') < pt(v')$ and $dn(v') < dt(v')$, updating the proof and disproof numbers along $\mathcal{P}$.
DFPN then resumes MPN selection from $v'$.

First, set $pt(r)=dt(r)=\infty$ for the root $r$.
Let $v$ be the currently visited node, $w$ and $w'$ be its children with the lowest and second lowest disproof number, where $w$ is the next node to be selected.
The thresholds of $w$ are then set as
\begin{align}
\begin{split}
\label{align_dfpn_3}
    pt(w) &= dt(v) - dn(v) + pn(w),\\
    dt(w) &= \min \{pt(v), dn(w') + 1\}.
\end{split}
\end{align}
The decreased memory consumption of DFPN comes at the cost of increased time complexity since the proof and disproof numbers of some of the previously visited nodes are lost and potentially need to be recomputed.
To balance this trade-off, DFPN is often combined with a transposition table that maintains previously computed values.

\subsection{Grundy Numbers}

As shown by \citet{lemoineViennot12Inevitable} and by \citet{belingRogalski2020}, the \emph{Spra\-gue--Grundy Theorem} \cite{grundy1939mathematics, sprague1935uber} can be applied to effectively simplify game trees of impartial games under the normal play convention; see Figure~\ref{fig_tree_complexity}.
Since we build on these techniques, we briefly recall the necessary background.

\emph{Nim} is a game that is played on $h$ heaps with $n_1, \dots, n_h \in \mathbb{N}_0$ objects.
With each move, a player must remove at least one object from a single heap, and the first player with no move loses the game.
Nim is a strongly solved game, as~\citet{bouton1901nim} proved that the outcome of Nim is loss if and only if $n_1 \oplus \cdots \oplus n_h = 0$, where $\oplus$ is a bitwise exclusive or.

Let $P_1,\dots, P_k$ be positions of an impartial game under the normal play convention.
Then, the \emph{combination } of $P_1, \dots, P_k$, denoted by $P_1 + \dots + P_k$, is a position in which, in each turn, the players decide to move in one of the positions $P_1, \dots, P_k$ while leaving the other positions untouched.
The first player with no move in any of $P_1, \dots, P_k$ loses in the combination $P_1 + \dots + P_k$.
A position $P$ is \emph{atomic} if it cannot be expressed as a combination of at least two non-empty positions; otherwise, it is \emph{decomposable}, in which case $P=P_1+\dots+P_k$ for non-empty atomic positions $P_1,\dots,P_k$ and $k \geq 2$.
Two positions $P$ and $Q$ are \emph{equivalent} if, for any position $R$, the combinations $P + R$ and $Q + R$ have the same outcome.

The \emph{Grundy number} $gn(P)$ of a position $P$ (also called the \emph{nimber} of $P$) is defined recursively as follows.
If $P$ is a terminal position, then $gn(P) = 0$.
Otherwise, $gn(P)$ is equal to $\min{\mathbb{N}_0 \setminus G}$ where $G$ is the set of the Grundy numbers of the children of $P$.
Using $\ast n$ to denote a Nim position with a single heap of $n$ objects, we formulate the \emph{Sprague--Grundy Theorem}~\cite{grundy1939mathematics, sprague1935uber}, which states that each equivalence class of impartial games under the normal play convention can be represented by a unique Grundy number.

\begin{thm}[The Sprague--Grundy Theorem]
\label{thm-SpragueGrundy}
Each posi\-tion $P$ of an impa\-rtial game under the normal play convention is equivalent to $\ast gn(P)$.
\end{thm}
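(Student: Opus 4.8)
The plan is to deduce the theorem from Bouton's theorem together with two lemmas, working throughout with the outcomes of combinations. I will call losses $\mathcal{P}$-positions and wins $\mathcal{N}$-positions, write $\operatorname{out}(\cdot)$ for the outcome of a position, and use freely that combination is commutative and associative up to isomorphism of game trees, that a combination has a legal move iff some summand does, and that under the normal play convention every terminal position is a $\mathcal{P}$-position while every child of a $\mathcal{P}$-position is an $\mathcal{N}$-position.

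First I would prove an auxiliary lemma: \emph{if $A$ is a $\mathcal{P}$-position then $\operatorname{out}(A+R)=\operatorname{out}(R)$ for every position $R$}. The proof is by induction on the game tree of $A+R$. If $R$ is a $\mathcal{P}$-position, every move out of $A+R$ lands in a single summand at an $\mathcal{N}$-position, and the opponent can answer inside that same summand to return to a $\mathcal{P}$-position, reaching a position of the form $A'+R$ or $A+R'$ with both summands again $\mathcal{P}$-positions and with strictly smaller game tree, which is a $\mathcal{P}$-position by the induction hypothesis; hence $A+R$ is a $\mathcal{P}$-position. If $R$ is an $\mathcal{N}$-position, the mover sends $R$ to a $\mathcal{P}$-position $R'$, reaching $A+R'$, which is a $\mathcal{P}$-position by the case just handled, so $A+R$ is an $\mathcal{N}$-position. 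The base case in which $A$ and $R$ are both terminal is immediate.

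Next comes the core step: \emph{for every impartial position $P$ and every $n\in\mathbb{N}_0$, the combination $P+\ast n$ is a $\mathcal{P}$-position if and only if $n=gn(P)$}. I would prove this by strong induction on the game tree of $P$, establishing both directions simultaneously since each uses the other on subpositions. Put $g=gn(P)$ and $S=\{gn(C): C\text{ a child of }P\}$, so that $g=\operatorname{mex}(S)$, whence $g\notin S$ and $\{0,\dots,g-1\}\subseteq S$. To see that $P+\ast g$ is a $\mathcal{P}$-position I check every move out of it: a move $P\to C$ gives $C+\ast g$, which is an $\mathcal{N}$-position by the induction hypothesis because $gn(C)\in S$ forces $gn(C)\ne g$; a move $\ast g\to\ast m$ with $m<g$ gives $P+\ast m$, from which the continuation $P\to C$ with $gn(C)=m$ (such a $C$ exists because $m\in S$) reaches the $\mathcal{P}$-position $C+\ast m$, so $P+\ast m$ is an $\mathcal{N}$-position. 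For the converse, if $n>g$ the move $\ast n\to\ast g$ reaches the $\mathcal{P}$-position $P+\ast g$, and if $n<g$ then $n\in S$, so the move $P\to C$ with $gn(C)=n$ reaches the $\mathcal{P}$-position $C+\ast n$; in either case $P+\ast n$ is an $\mathcal{N}$-position. (The degenerate cases $g=0$ and $P$ terminal fall under the terminal base case.)

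Finally I would assemble the pieces. Fix $P$, set $g=gn(P)$, and let $R$ be arbitrary. By Bouton's theorem $\ast g+\ast g$ is a $\mathcal{P}$-position, since its Nim-value is $g\oplus g=0$, and the core step gives that $P+\ast g$ is a $\mathcal{P}$-position. Applying the auxiliary lemma twice,
\begin{align*}
\operatorname{out}(P+R)
&=\operatorname{out}\big((P+R)+(\ast g+\ast g)\big)\\
&=\operatorname{out}\big((P+\ast g)+(R+\ast g)\big)\\
&=\operatorname{out}(\ast g+R),
\end{align*}
so $P$ and $\ast gn(P)$ have the same outcome when combined with any $R$, which is exactly the claimed equivalence. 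I expect the two inductions to be where the care is needed: in the auxiliary lemma one must check that the opponent's restoring move always exists and strictly shrinks the game tree, and in the core step the two implications genuinely depend on each other, so they must be run together rather than in sequence; the remaining points — associativity and commutativity of combination, and the terminal base cases — are routine.
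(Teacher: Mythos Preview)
Your proof is correct and follows the classical route via the mex characterization and the ``copy in the other summand'' strategy. However, the paper does not actually prove Theorem~\ref{thm-SpragueGrundy}: it is stated as background with citations to \citet{grundy1939mathematics} and \citet{sprague1935uber}, and the only theorem the paper proves is Theorem~\ref{thm-dfpn} on the DFPN thresholds. So there is no paper proof to compare against; you have supplied a self-contained argument where the authors simply invoked the literature.

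One small remark on your write-up: your caution that the two implications in the core step ``genuinely depend on each other'' is slightly overstated. Within a single induction step for $P$, the forward direction (that $P+\ast g$ is a $\mathcal{P}$-position) uses only the induction hypothesis on children $C$, and the converse then uses the forward direction for $P$ itself together with the induction hypothesis on children. So you can cleanly prove the forward direction first and the converse second inside each step; no simultaneous bootstrapping is needed beyond the ordinary strong induction on $P$.
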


It follows from Theorem~\ref{thm-SpragueGrundy} that the outcome of $P_1 + \dots + P_k$ is loss if and only if $gn(P_1) \oplus \dots \oplus gn(P_k) = 0$.
Thus, we can compute the outcome of a decomposable position $P=P_1+\cdots+P_k$ more efficiently using the Grundy numbers $gn(P_1), \dots, gn(P_k)$ \cite{lemoineViennot12Inevitable}.

\section{Methods}

First, we formalize extended NAND trees with Grundy numbers that were implicitly used by~\citet{lemoineViennot12Inevitable} to efficiently determine the outcomes of decomposable positions.
Then, we describe and improve PNS for extended NAND trees by \citet{lemoine2011phdthesis} and introduce DFPN for such trees.
Finally, we describe our massively parallel version PNS-PDFPN of Proof Number Search. 

\subsection{NAND Trees with Grundy Numbers}

To apply the Sprague--Grundy Theorem, we need to determine the Grundy number $gn(P)$ of a position $P$ in an impartial game $G$ under the normal play convention.
We efficiently compute $gn(P)$ using the following recursive procedure introduced by~\citet{lemoineViennot12Inevitable}: start with $n=0$ and increment $n$ until the outcome of the combination $P+\ast n$, called a \emph{couple}, is loss.
Then $gn(P)=n$, because $gn(P) = n$ if and only if the outcome of $P+\ast n$ is loss. 

If $P$ is atomic, then the outcome of the couple $P+\ast n$ is obtained as before from the outcomes of its children, which are the couples of the form $P'+\ast n$ and $P + \ast n'$ where $P'$ is a child of $P$ and $0 \leq n' < n$.
The outcome of $P+\ast n$ with decomposable $P=P_1+\cdots+P_k$ is obtained by first computing $gn(P_1),\dots,gn(P_{k-1})$ using the above procedure and then by computing the outcome of the couple $P_k+\ast n'$, where $n' = n\oplus gn(P_1)\oplus\cdots\oplus gn(P_{k-1})$ and $P_k$ is now atomic.

The nodes of a NAND tree with Grundy numbers correspond either to atomic positions $Q$ of $G$ or to couples $P+\ast n$ consisting of a position $P$ of $G$ and a Nim position $\ast n$ with $n \in \mathbb{N}_0$.
We have three types of nodes: decomposable, atomic, and Grundy; see Figure~\ref{fig_nand_tree_nimbers}.
A node $v$ corresponding to a couple $P+\ast n$ is \emph{decomposable} if $P$ is decomposable and \emph{atomic} if $P$ is atomic.
Each node $u$ corresponding to an atomic $Q$ is a \emph{Grundy} node.
The children of~$v$ correspond to the children of $P+\ast n$ if $P$ is atomic and to Grundy nodes $P_1,\dots,P_{k-1}$ and the couple $P_k + \ast n'$ if $P$ is decomposable.
The children of~$u$ correspond to $Q+\ast 0, \dots, Q+\ast c(Q)$, where $c(Q)$ is the number of children of $Q$.

\begin{figure}[t]
\centering
\includegraphics[scale=0.77]{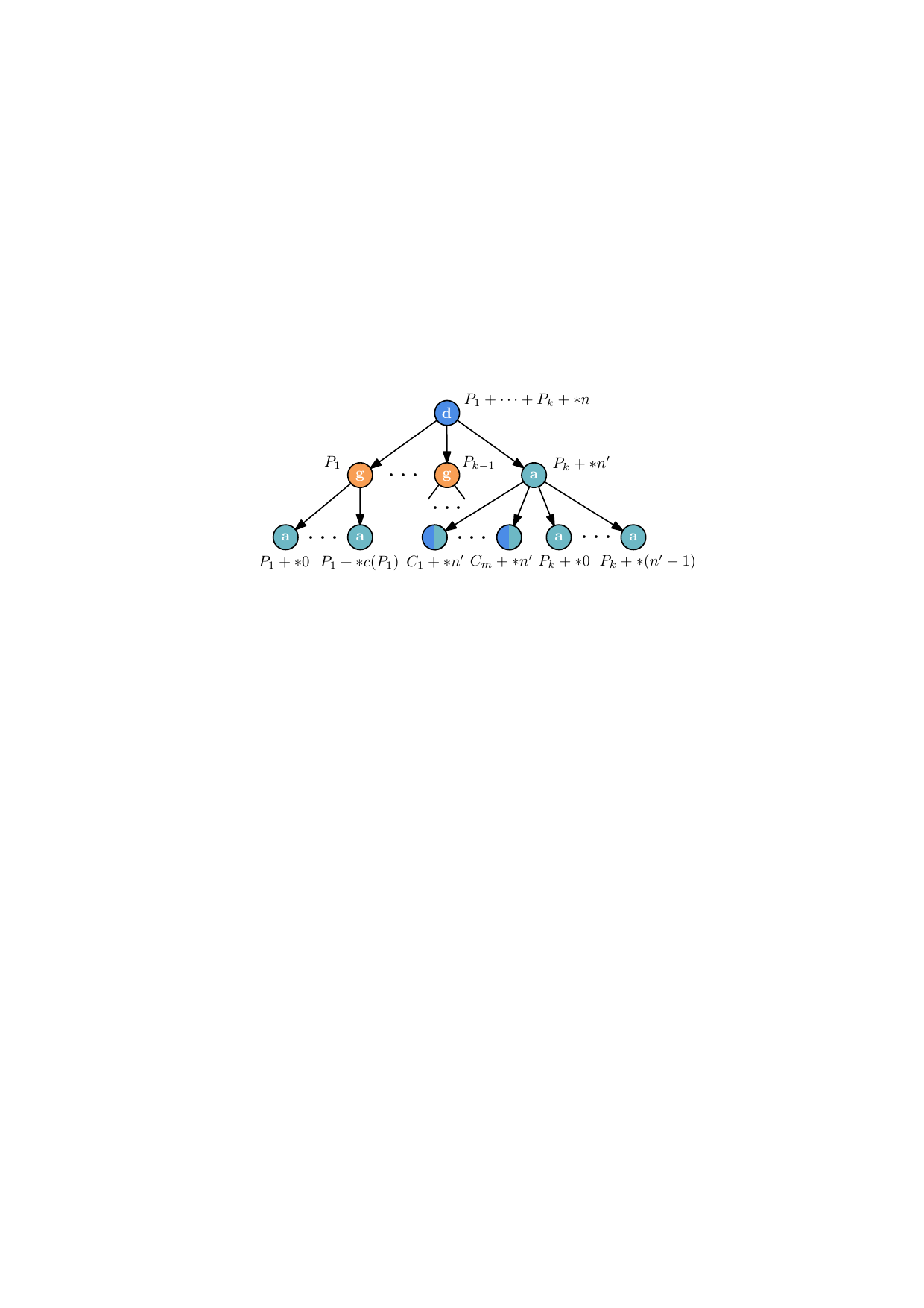}
\caption{A scheme of a NAND tree with Grundy numbers.
We use $C_1, \dots, C_m$ for the children of $P_k$ and \circledchar{spotsColor}{d}, \circledchar{glopColor}{a}, \circledchar{ajsColor}{g} for decomposable, atomic, and Grundy nodes, respectively, and \halfcirclechar{spotsColor}{glopColor}{} for nodes that can be atomic or decomposable.}
\centering
\label{fig_nand_tree_nimbers}
\end{figure}
The value of a node $v$ corresponding to $P+\ast n$ is again either win, loss, or unknown.
For terminals, which correspond to positions $\emptyset+\ast 0$, where $\emptyset$ is the empty position in $G$, the value is loss.
If $v$ is atomic, then its value is determined in the same way as in the original NAND tree with respect to the children of $P+\ast n$.
If $v$ is decomposable, then the value is unknown if $v$ is a leaf, and is equal to the value of the node corresponding to $P_k + \ast n'$ if $v$ is internal.
For a Grundy node $u$ corresponding to $Q$, the value of $u$ is either a Grundy number $gn(Q)$ or unknown.
The value of $u$ equals $n$ if there is a losing child $Q + \ast n$ of $Q$, and is unknown otherwise.

\subsection{Proof-Number Search with Grundy Numbers}
To extend PNS to NAND trees with Grundy numbers, we only need to adapt the proof and disproof numbers and specify the next node $w$ to be selected.
The resulting algorithm \emph{PNS with GN} then proceeds exactly as PNS.

We define the proof and disproof numbers so that they give lower bounds on the size of the proof of a given position, which guarantees that PNS finds a minimal proof.
For leaves, we initialize them to $1$.
For atomic nodes, the proof and disproof numbers and the next node $w$ are defined as in the original setting.
If $u$ is a Grundy node, then it corresponds to an atomic position~$Q$ and we use $v_i$ to denote the child of $u$ representing $Q+\ast i$.
The children of $u$ are atomic and are generated incrementally with increasing~$i$, each only after the previous one has been proved, as prescribed by the recursive procedure for the calculation of $gn(Q)$.
If $v_j$ is the last child of $u$ generated so far, then we set $pn(u) = dn(u) = \min{\{pn(v_j), dn(v_j)\}}$ and $w = v_j$.
Finally, let $v$ be a decomposable node.
Then $v$ corresponds to a couple $P+\ast n$ where $P=P_1+\cdots+P_k$ and $k \geq 2$. 
We use $u_1,\dots,u_k$ to denote the nodes where each $u_i$ is a Grundy node corresponding to $P_i$ and $v'_k$ is an atomic node corresponding to $P_k + \ast n'$.
Note that only $u_1,\dots,u_{k-1},v'_k$ are the children of $v$, and that in order to generate $v'_k$, we first need to know the Grundy numbers of $P_1,\dots,P_{k-1}$, as $n' = n\oplus gn(P_1)\oplus\cdots\oplus gn(P_{k-1})$.
Thus, if $v'_k$ has not been generated, we use $u_k$ instead and set $pn(v) = dn(v) = \sum_{i = 1}^k pn(u_i)$ and $w$ to be the first node from $u_1,\dots,u_{k-1}$ with unknown Grundy number.
Otherwise, we let $pn(v) = pn(v'_k)$, $dn(v) = dn(v'_k)$, and $w = v'_k$.

We improve this version of PNS from \citet{lemoine2011phdthesis} by following \citet{schijf1994transpositions} and merging nodes representing identical states to transform the tree into a directed acyclic graph.
This significantly reduces the computation time without increasing memory overhead when combined with storing all touched nodes.

\paragraph{DFPN with Grundy Numbers.}

We now introduce a new variant of DFPN for extended NAND trees with Grundy numbers (\emph{DFPN with GN}).
The algorithm follows the same structure as DFPN for NAND trees, requiring only modifications to the thresholds and the way they are computed.

For each node $v$ from the currently explored path $\mathcal{P}$, we again maintain thresholds $pt(v)$ and $dt(v)$ upper-bounding proof and disproof numbers.
However, this time we also need to maintain a new threshold $mt(v)\in \mathbb{N}_0 \cup \{\infty\}$ with parameters $ps(v)\in \mathbb{N}_0$ and $ds(v)\in \mathbb{N}_0$.
Intuitively, $mt(v)$ bounds the minimum of $pn(v)$ and $dn(v)$, shifted by $ps(v)$ and $ds(v)$, which is necessary due to decomposable nodes.

To simplify the definition of the thresholds, we replace each Grundy node $u$ in its decomposable parent with the node $v_j$, which would be selected as the next node if we were in $u$.
Then, we can keep the thresholds only for atomic and decomposable nodes.
Now, let $v$ be the currently visited node, and let $w$ be the next node to be selected. 
For an atomic $v$, we let $w'$ be its child with the second-lowest disproof number, and we set the thresholds as
\begin{align}
    \label{eq:gn_thresholds}
    \begin{split}
            pt(w) &= dt(v) - dn(v) + pn(w),\\
            dt(w) &= \min \{pt(v), dn(w') + 1\},\\
            mt(w) &= mt(v),
    \end{split}
\end{align}
where $ps(w) = ds(v) + dn(v) - pn(w)$ and $ds(w) = ps(v)$.
For a decomposable $v$, we set $pt(w)=pt(v)$, $dt(w)=dt(v)$, and $mt(w)=mt(v)$ if $v'_k$ has been generated.
If $v'_k$ has not been generated yet, we define the thresholds as
\begin{align*}
    \begin{split}
            pt(w) &= dt(w) = \infty,\\
            mt(w) &= t(v) - pn(v) + \min \{pn(w), dn(w) \},
    \end{split}
\end{align*}
where $ps(w) = ds(w) = 0$ and $t(v) = \min \{ pt(v), dt(v), \allowbreak mt(v) - \min \{ ps(v),ds(v) \} \}$.
For the root $r$, we initialize $pt(r)=dt(r)=mt(r) =\infty$ and $ps(r)=ds(r)=0$.
With this choice of thresholds, we obtain the following result.

\begin{thm}
\label{thm-dfpn}
For every node $v$ of $\mathcal{P}$, the MPN is in the subtree of $v$ if and only if $pn(v) < pt(v)$, $dn(v) < dt(v)$, and $\min \{pn(v) + ps(v), dn(v) + ds(v)\} < mt(v)$.
\end{thm}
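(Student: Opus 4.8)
\emph{Proof plan.} This statement is a loop invariant of DFPN with GN, so it suffices to prove that it holds when $\mathcal{P}$ consists only of the root, that it is preserved by a single \emph{descend} step, and that the bottom-up update of the proof and disproof numbers after an expansion (and a \emph{backtrack}) leaves it intact; the latter follow from the descend case applied along the truncated path, since the argument below uses only the current values $pn(v),dn(v)$, the recurrences relating them to the children, and the fixed thresholds. I would argue by induction on the position of $v$ in $\mathcal{P}$, i.e.\ top-down from the root. The base case $v=r$ is immediate: with $pt(r)=dt(r)=mt(r)=\infty$ and $ps(r)=ds(r)=0$, the three inequalities hold exactly when $r$ is not solved, which is precisely when the MPN exists and therefore lies in the (whole) subtree of $r$.

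For the inductive step, let $v\in\mathcal{P}$ have recorded successor $w\in\mathcal{P}$, and assume the equivalence for $v$. Because the subtree of $w$ is contained in that of $v$, the MPN lies in the subtree of $w$ if and only if it lies in the subtree of $v$ \emph{and} the PNS-with-GN descent from $v$ currently selects $w$; using the induction hypothesis on the first conjunct, the step reduces to showing that the three threshold conditions for $w$, after substituting the update rules, are equivalent to the three threshold conditions for $v$ together with ``the descent selects $w$''. I would split into the three node types.

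If $v$ is atomic, then, since only the subtree of $w$ has been explored below $v$ since the last descent through $v$, the remaining children of $v$ retain their disproof numbers, so ``the descent selects $w$'' is equivalent to $dn(w)\le dn(w')$, and under this condition $pn(v)=\min_c dn(c)=dn(w)$. Substituting into \eqref{eq:gn_thresholds} gives $pn(w)<pt(w)\iff dn(v)<dt(v)$ and $dn(w)<dt(w)\iff (pn(v)<pt(v)$ and $dn(w)\le dn(w'))$, which is the classical DFPN bookkeeping; moreover $pn(w)+ps(w)=dn(v)+ds(v)$ and $dn(w)+ds(w)=dn(w)+ps(v)=pn(v)+ps(v)$, so with $mt(w)=mt(v)$ the third condition for $w$ becomes literally the third condition for $v$. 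If $v$ is decomposable and $v'_k$ has already been generated, the descent is forced into $w=v'_k$, and all of $pt,dt,mt,ps,ds$ are inherited while $pn(w)=pn(v'_k)=pn(v)$ and $dn(w)=dn(v)$, so the three conditions for $w$ are verbatim the three for $v$. If $v$ is decomposable and $v'_k$ has not been generated, then $pn(v)=dn(v)=\sum_{i<k}pn(u_i)$ and the descent is forced into $w$ (the recorded successor of the first Grundy child with unknown Grundy number), so ``the descent selects $w$'' just means $w$ is unsolved, i.e.\ $pn(w)<\infty$ and $dn(w)<\infty$, which are the first two conditions for $w$; and substituting $mt(w)=t(v)-pn(v)+\min\{pn(w),dn(w)\}$ collapses the third condition for $w$ to $pn(v)<t(v)$, which, using $pn(v)=dn(v)$, unpacks exactly into the three conditions for $v$.

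The part that goes beyond the correctness argument for plain DFPN is the handling of $mt$, $ps$, and $ds$, and in each case it reduces to one identity: for atomic steps the shift parameters are defined so that $\min\{pn(w)+ps(w),dn(w)+ds(w)\}=\min\{pn(v)+ps(v),dn(v)+ds(v)\}$ once $w$ is the current lowest-disproof child; past a decomposable node with $v'_k$ generated everything is inherited; and before $v'_k$ the fresh value $mt(w)$ is chosen exactly so that $\min\{pn(w),dn(w)\}<mt(w)$ becomes $pn(v)<t(v)$, i.e.\ so that the third condition at the descendants keeps tracking the budget at that decomposable node. Conceptually, $\min\{pn(v)+ps(v),dn(v)+ds(v)\}$ compared to $mt(v)$ re-expresses, at any node $v$, the condition ``the (common) proof/disproof number of the nearest decomposable ancestor at which $mt$ was last (re)set has not yet reached its own $t$-threshold''. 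I expect the main obstacle to be not any individual case but verifying this $mt/ps/ds$ identity uniformly, including the transient snapshots just before a backtrack (where $w$ is no longer the selected child or is already solved) and the arithmetic with $\infty$; the remaining manipulations are routine.
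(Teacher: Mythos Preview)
Your plan is correct and follows essentially the same route as the paper's proof: induction along $\mathcal{P}$ from the root, with the same three-way case split (atomic parent; decomposable parent with $v'_k$ generated; decomposable parent without $v'_k$), and in each case the same identities---$pn(w)+ps(w)=dn(v)+ds(v)$, $dn(w)+ds(w)=pn(v)+ps(v)$ for atomic $v$, and $mt(w)=t(v)-pn(v)+\min\{pn(w),dn(w)\}$ (after rewriting the paper's $pn_0$-formulation via the analogue of~\eqref{eq-zeropnDecomposable}) for decomposable $v$---drive the equivalence. Your explicit phrasing of the inductive step as ``conditions for $v$'' $\wedge$ ``the descent currently selects $w$'' and your conceptual reading of $mt/ps/ds$ as tracking the $t$-budget of the nearest decomposable ancestor are cleaner than the paper's exposition, but the argument is the same.
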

See Appendix for the proof.
To balance the time complexity and memory consumption of DFPN with GN, we also incorporate a transposition table of proof and disproof numbers, maintained by the replacement strategy proposed by~\citet{nagai1999phdthesis}.
In addition, we store the Grundy numbers derived during computation in a separate database.
Since this database is much smaller and fits easily into memory, we do not apply any replacement strategy to it.

\subsection{PNS-PDFPN Algorithm}
Our new parallel variant \emph{PNS-PDFPN} of Proof-Num\-ber Search operates on two parallelized levels.
The first PNS level targets distributed-memory systems and is based on Job-Level Proof-Number Search used by \citet{schaeffer2007checkers}, \citet{saffidine2012parallel}, and \citet{wu2013joblevel}.
At this level, the \emph{master} process maintains the current proof state and repeatedly assigns jobs to \emph{workers} for asynchronous second-level processing.
At the second level, each worker performs the parallel DFPN algorithm by \citet{kaneko2010parallel}, which we call \emph{PDFPN}, to utilize shared memory within a single cluster node.
We also introduce the sharing of key results between workers to reduce search overhead.
We now describe both levels in detail.

\paragraph{First-Level Parallelization.}
The master and workers typically run on separate nodes of a cluster and communicate over an interconnected network.
When a worker becomes idle, the master selects a so-called pseudo-MPN leaf $\ell$ and assigns it to the worker.
The worker then processes $\ell$ asynchronously until it is solved or the maximum number of expansions is reached.
The resulting proof and disproof numbers of $\ell$ and its children are then sent back to the master, which uses them to expand $\ell$ in the master tree and updates its proof and disproof numbers.
During processing of~$\ell$, the worker periodically sends updated values $pn(\ell)$ and $dn(\ell)$ to keep the master tree as up-to-date as possible.
To prevent reassignment of the same jobs, the assigned leaves are locked, and the definition of proof and disproof numbers is slightly adjusted to avoid misleading attraction to the locked leaves; see \citet{wu2013joblevel} for details.

Similar to \citet{xichen15Hex} and their JL-UCT Search, our master maintains a database of key computed results shared with all workers to reduce search overhead.
Newly derived results are sent to workers along with their assigned jobs, while workers report new results back to the master with each update.
Finally, if multiple workers are located on the same node, then they are grouped together to share the same local version of the database.

\paragraph{Second-Level Parallelization.}
As cluster nodes consist of multiple CPU cores, it becomes advantageous, especially with an increasing number of workers, to start parallelizing the workers themselves, rather than adding more workers that would otherwise process less relevant jobs.
This worker reduction also decreases communication overhead and allows sharing of proof and disproof numbers within a node.

Our workers are based on DFPN to make full use of available memory; thus, we parallelize them using PDFPN.
In this algorithm, there are as many threads as the number of cores assigned to the worker.
Each thread performs an independent DFPN search in the subtree of $\ell$, while sharing the lock-protected transposition table of proof and disproof numbers.
To decrease redundant thread computation in shared subtrees, the disproof numbers are virtually increased during the selection of the next node $w$ by the number of threads currently computing in the corresponding subtrees.
The thresholds in \eqref{align_dfpn_3} and \eqref{eq:gn_thresholds} are then modified by subtracting $th(w)$ from the term $dn(w') + 1$, where $th(w)$ is the number of threads in the subtree of $w$.
Additionally, if a thread solves a position currently computed by a different one, the other thread is notified to backtrack to that position.

\paragraph{Enhancements for Impartial Games.}
To employ the theory of Grundy numbers, we incorporate PNS with GN and DFPN with GN.
We use all computed Grundy numbers as the key shared results, since they are highly reusable and inexpensive to share.
To enable parallel computation of multiple children of a decomposable node, the next node $w$ within a decomposable node is selected so that the value $\min\{pn(w), dn(w)\}$ is minimum, rather than always selecting the first unsolved child.

\section{Experiments}

We implemented \emph{SPOTS}, a PNS-PDFPN-based solver for combinatorial games, augmented with Grundy numbers to reduce game trees of impartial games effectively.
From now on, we use PNS and DFPN to refer to their Grundy-enhanced variants.
We evaluate the performance of SPOTS on Sprouts, an impartial game well-suited for Proof-Number Search due to its highly unbalanced game trees, large branching factors, and the fact that it often naturally decomposes into independent subpositions.
To incorporate Sprouts into SPOTS, we implemented the string-based position representation from \citet{applegate1991computer}.
Our experiments were carried out on the 29-spot and 47-spot positions, which are the largest that allow each trial to complete within 24 hours.
The experimental results are reported as the mean $\pm$ standard error of the mean, based on three independent measurements.

\subsection{Efficiency of DFPN with GN}

In Figure~\ref{fig_first_experiment}, we demonstrate how DFPN with GN allows for tuning the trade-off between computation time and memory consumption.
By adjusting the capacity~$C$ of the transposition table, memory requirements can be substantially reduced at the cost of a moderate increase in computation time.
We observe that the transposition table with $C = 10^6$ is already sufficiently large, as it is not fully saturated. Despite this, our PNS implementation still outperforms this DFPN configuration, as it operates faster on its explicitly expanded tree; however, the PNS tree can grow indefinitely. 

We also compare our PNS with the PNS implemented by \citet{lemoine2015nimber} in their solver GLOP.
Unlike our method, GLOP operates on a proper tree without transpositions and discards all visited nodes in the solved subtrees to reduce memory usage.
Along with a roughly three-times faster state representation, our design choices lead to a roughly 100-fold reduction in computation time compared to GLOP while maintaining a comparable maximum tree size.

\begin{figure}[t]
\centering
\includegraphics[scale=0.9]{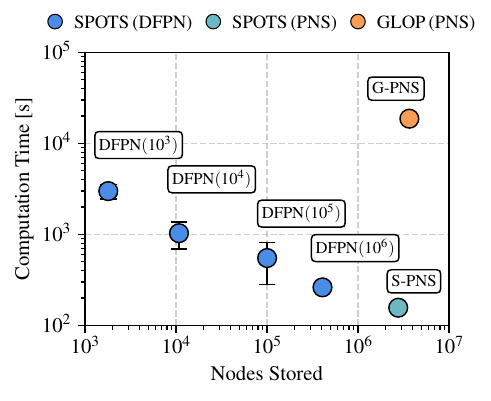}
\caption{Performance of PNS implementations in GLOP and SPOTS and of DFPN($C$) in SPOTS on the 29-spot position, where $C$ is the capacity of the transposition table.}
\label{fig_first_experiment}
\end{figure}

\subsection{Scaling Efficiency of PNS-PDFPN}

PNS-PDFPN includes five tunable parameters: \texttt{workers}, the total number of workers; \texttt{iterations}, the maximum number of expansions per job; \texttt{updates}, the number of iterations between each update sent to the master; \texttt{grouping}, the number of workers grouped within a single node; and \texttt{threads}, the number of threads assigned to PDFPN per worker.
The number of CPU cores used by PNS-PDFPN is then equal to \texttt{workers} times \texttt{threads} plus the number of cores allocated to the master.
We now analyze the effect of the worker synchronization and parameter setting on the scaling efficiency of the algorithm, which is reported with respect to the number of CPU cores allocated to workers, excluding the master CPU.

\paragraph{Retaining and Sharing Second-Level Information.}

The parallel \emph{PPN\textsuperscript{2}} search by \citet{saffidine2012parallel} corresponds to PNS-PNS using PNS at both levels.
However, after each job is completed, the second-level PNS search tree is discarded, losing the entire worker state. 
In contrast, our PNS-DFPN workers retain their transposition tables, completely preserving their state throughout the whole computation.
Figure~\ref{fig_second_experiment} shows how the retention and sharing of the worker state impact scaling efficiency.
Although PNS-PNS scales well to up to 64 cores, worker state resets hamper its performance so significantly that it is outperformed by sequential DFPN($10^6$).
However, when the key results are retained, specifically the computed Grundy numbers, the speedup improves significantly.
Further performance gains can be achieved by PNS-DFPN preserving computed proof and disproof numbers.
However, this has a smaller effect than retaining the Grundy numbers.
Finally, sharing Grundy numbers among workers yields a further significant performance gain.

\begin{figure}[t]
\centering
\includegraphics[scale=0.9]{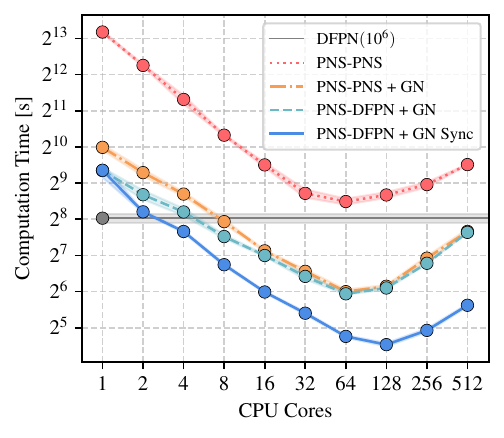}
\caption{Comparison of PNS-PNS and PNS-DFPN, including the impact of retention of Grundy numbers (GN) in workers and their synchronization (GN Sync). All variants are run on the 29-spot position with 100 \texttt{iterations}, 100 \texttt{updates}, no grouping, and no second-level parallelization.
}
\label{fig_second_experiment}
\end{figure}

\paragraph{PNS-PDFPN Ablations.}
We evaluate the impact of the PNS-PDFPN parameters on the scaling efficiency, measured relative to the runtime of DFPN($5 \cdot 10^7$), serving as a baseline for comparing various parallel Proof-Number Search algorithms.
We compare PNS-PDFPN against PDFPN by \citet{kaneko2010parallel} and the PNS-DFPN scheme employed by \citet{schaeffer2007checkers} to solve Checkers, which, according to our measurements, represent state-of-the-art approaches for shared- and distributed-memory systems, respectively.

In Figure~\ref{fig_third_experiment}, we incrementally add parameters until we arrive at the full PNS-PDFPN with all the improvements.
For each setup, we report the speedup achieved using the best-performing parameter setting, where PNS-DFPN is already optimized for \texttt{iterations} and \texttt{updates}, as proposed by \citet{saffidine2012parallel}.
Without GN synchronization, PNS-DFPN scales very poorly.
Enabling synchronization improves scalability up to 512 cores, but then the performance degrades due to significant synchronization overhead.
This issue is overcome by grouping workers, reducing the amount of shared results, and thus the synchronization overhead.
Further scalability is achieved by introducing second-level parallelization, which ultimately leads to a speedup of $208.67 \pm 9.17$ without relying on any domain knowledge, which exceeds even the best speedup of $21.35 \pm 2.40$ achieved by PDFPN using shared memory.

We observe a notable improvement in scaling efficiency after incorporating the child-ordering heuristic proposed by \citet{lemoine2015nimber} to break ties during the selection of the next node $w$.
Adding the heuristic does not enhance the overall performance if applied to optimally tuned parameters without the heuristic, as shown in the upper part of Table~\ref{tab:heuristicEvaluation}.
However, the heuristic becomes beneficial when the parameters are tuned specifically for its use; see the bottom part of Table~\ref{tab:heuristicEvaluation}.
The heuristic then enables more effective utilization of second-level parallelism in longer-running jobs by providing better guidance to branches that are more likely to yield shorter proofs.
With this modest domain knowledge, the final speedup of PNS-PDFPN reaches $332.97 \pm 26.8$.
Further parameter analysis and hardware specification are given in Appendix.

\begin{figure}[t]
\centering
\includegraphics[scale=0.9]{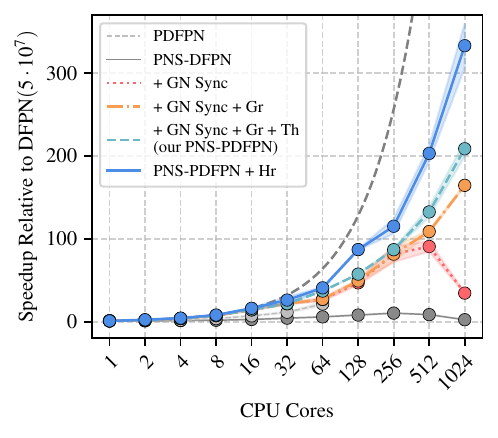}
\caption{
Impact of GN synchronization (GN Sync), grouping (Gr), second-level parallelization (Th), and the child-ordering heuristic (Hr) on the final speedup of PNS-PDFPN relative to DFPN($5 \cdot 10^7$), which finished in $293\pm41.7$ minutes. Measured on the 47-spot position.
}
\label{fig_third_experiment}
\end{figure}

\subsection{Solving Sprouts Positions}

With SPOTS, we determined 32 new outcomes of $n$-spot positions using the sequential version and 10 more with its parallel counterpart, nearly doubling the number of previously known results; see Appendix.
We verified the new proofs using the verification software by \citet{lemoine2015nimber}.
All computed outcomes agree with the Sprouts Conjecture, which thus remains open.
Compared to the previous state-of-the-art solver GLOP, our parallel version of SPOTS achieves a speedup of four orders of magnitude on 1024 cores.
We base this estimate on the combined improvements of the sequential SPOTS over GLOP and the parallel SPOTS over the sequential version.
To illustrate the speedup, SPOTS solves the $39$-spot position, one of the most complex solved by GLOP, in just 7.7 minutes using 1024 cores.
The outcomes determined by parallel SPOTS are much more complex than those previously computed.
The most difficult position was solved in 24 days on 512 cores, and its proof size (the number of stored Grundy numbers) exceeds prior ones by a factor of 1,000.

\begin{table}[t]
{\fontsize{9}{10}\selectfont
    \centering
    \begin{tabular}{rrrrcc}
        \toprule
        & Cores & \multicolumn{2}{c}{Parameters} & \multicolumn{2}{c}{Speedup} \\
        \cmidrule(r){3-4}
        \cmidrule(lr){5-6}
        & & \multicolumn{1}{c}{It} & \multicolumn{1}{c}{Th} & PNS-PDFPN & + Heuristic \\
        \cmidrule(r){1-6}
        \multirow{11}{*}{\rotatebox{90}{\color{gray}{Best Values for No-Heuristic}}}
        &    1 &  1k &  1 & \multicolumn{1}{r}{ \textbf{    1.08}\sem{0.10}   } & \multicolumn{1}{r}{            0.99\sem{0.00}    } \\
        &    2 &  1k &  1 & \multicolumn{1}{r}{             1.87\sem{0.02}  }   & \multicolumn{1}{r}{ \textbf{   2.36}\sem{0.28}   } \\
        &    4 &  1k &  1 & \multicolumn{1}{r}{             3.91\sem{0.31}  }   & \multicolumn{1}{r}{ \textbf{   4.34}\sem{0.32}   } \\
        &    8 &  1k &  1 & \multicolumn{1}{r}{ \textbf{    7.41}\sem{0.21}   } & \multicolumn{1}{r}{            7.10\sem{0.53}  } \\
        &   16 & 10k &  1 & \multicolumn{1}{r}{ \textbf{   14.37}\sem{0.86}   } & \multicolumn{1}{r}{            10.15\sem{4.48} }  \\
        &   32 & 10k &  1 & \multicolumn{1}{r}{            21.75\sem{0.68}  }   & \multicolumn{1}{r}{ \textbf{   23.26}\sem{0.05}  } \\
        &   64 &  1k &  4 & \multicolumn{1}{r}{            36.97\sem{0.29}  }   & \multicolumn{1}{r}{ \textbf{   37.93}\sem{0.21}      } \\
        &  128 &  1k &  8 & \multicolumn{1}{r}{ \textbf{   57.43}\sem{0.05}   } & \multicolumn{1}{r}{            45.76\sem{2.32}      } \\
        &  256 &  1k &  8 & \multicolumn{1}{r}{            86.87\sem{1.42}  }   & \multicolumn{1}{r}{ \textbf{   92.35}\sem{1.57}  } \\
        &  512 &  1k &  8 & \multicolumn{1}{r}{ \textbf{  132.44}\sem{6.11}   } & \multicolumn{1}{r}{           123.72\sem{3.97}      } \\
        & 1024 &  1k & 16 & \multicolumn{1}{r}{ \textbf{  208.67}\sem{9.17}   } & \multicolumn{1}{r}{           169.24\sem{2.86}      } \\        
        \cmidrule(r){1-6}
        \multirow{11}{*}{\rotatebox{90}{\color{gray}{Best Values for Heuristic}}}
        &    1 &   1k &   1 &  \multicolumn{1}{r}{  \textbf{  1.08}\sem{0.10}   } & \multicolumn{1}{r}{              0.99\sem{0.00}    } \\
        &    2 &   1k &   1 &  \multicolumn{1}{r}{            1.87\sem{0.02}    } & \multicolumn{1}{r}{ \textbf{     2.36}\sem{0.28}   } \\
        &    4 &   1k &   1 &  \multicolumn{1}{r}{            3.91\sem{0.31}    } & \multicolumn{1}{r}{ \textbf{     4.34}\sem{0.32}   } \\
        &    8 &  10k &   1 &  \multicolumn{1}{r}{            7.36\sem{0.41}    } & \multicolumn{1}{r}{ \textbf{     8.06}\sem{0.09}   } \\
        &   16 & 100k &   1 &  \multicolumn{1}{r}{           11.90\sem{0.94}    } & \multicolumn{1}{r}{ \textbf{    16.22}\sem{0.53}   } \\
        &   32 & 100k &   4 &  \multicolumn{1}{r}{           18.50\sem{0.76}    } & \multicolumn{1}{r}{ \textbf{    26.13}\sem{3.87}   } \\
        &   64 & 100k &   8 &  \multicolumn{1}{r}{           27.95\sem{2.19}    } & \multicolumn{1}{r}{ \textbf{    40.86}\sem{1.13}   } \\
        &  128 & 100k &   4 &  \multicolumn{1}{r}{           29.44\sem{0.53}    } & \multicolumn{1}{r}{ \textbf{    87.02}\sem{0.81}   } \\
        &  256 & 100k &   8 &  \multicolumn{1}{r}{           54.81\sem{3.71}    } & \multicolumn{1}{r}{ \textbf{   115.11}\sem{8.88}   } \\
        &  512 & 100k &  16 &  \multicolumn{1}{r}{           73.21\sem{6.36}    } & \multicolumn{1}{r}{ \textbf{   203.05}\sem{9.82}   } \\
        & 1024 & 100k &  32 &  \multicolumn{1}{r}{           97.60\sem{9.24}    } & \multicolumn{1}{r}{ \textbf{  332.97}\sem{26.8}   } \\       
        \bottomrule
    \end{tabular}
    \caption{Best-performing values of \texttt{iterations} (It) and \texttt{threads} (Th) for PNS-PDFPN without (top) and with (bottom) the heuristic.
    \texttt{Grouping} set to its maximum and \texttt{updates} to 1,000 are optimal across all experiments.}
    \label{tab:heuristicEvaluation}
}
\end{table}

\section{Conclusion}

While our experiments focus on the impartial game Sprouts, PNS-PDFPN is domain-independent and applicable to any combinatorial game.
For non-impartial games, the solver naturally reduces to exploring standard NAND trees, where computed Grundy numbers correspond to loss outcomes.
Position decomposition is the only impartial-specific component, used as an optional acceleration in all baselines, and therefore does not affect overall scalability.
Scalability arises from multi-level parallelization and intermediate-result sharing, both of which are general.
Sprouts offers one advantage for scaling: its relatively slow expansion rate allows frequent synchronization.
However, this does not favor our algorithm, which strongly outperforms state-of-the-art methods in the same setting.
In faster-expanding domains, we suggest using costlier search-guiding heuristics in exchange for improved scalability.
Lastly, since even minimal domain knowledge for job assignment notably boosts scaling, more sophisticated use of domain knowledge, such as RL-based policies, may yield further speedups.

\appendix

\section{Acknowledgments}
All authors were supported by the grant no.\ 25-18031S of the Czech Science Foundation (GA\v{C}R).
T. \v{C}\'{i}\v{z}ek was supported by the Charles University Grant Agency (GAUK) project no.\ 326525.
M. Schmid was also supported by the Charles University project UNCE 24/SCI/008.
The authors thank EquiLibre Technologies, Inc.\ for providing computational resources.
Additional computational resources were provided by the e-INFRA CZ project (ID:90254), supported by the Ministry of Education, Youth and Sports of the Czech Republic.
Special thanks to Neil Burch for valuable comments and insights.

\bibliography{aaai2026}

\clearpage

\section{Sprouts Game}

Sprouts is a well-known combinatorial pencil-and-paper game introduced by Conway and Paterson, serving as the domain for our experiments. 
Here, we provide an overview of the game, including its rules, properties, and historical background, and describe our efforts to solve its positions.

\subsection{Rules of the Game}

\emph{Sprouts} start with $n$ initial spots arbitrarily placed on a sheet of paper.
The players then alternate in connecting the spots by curves according to the following simple rules:

\begin{itemize}
    \item Curves either connect two different spots or form loops at a single spot.
    \item No curve can cross or touch itself or any other curve except at the endpoints.
    \item Each spot can be incident to at most three curves; a loop is counted twice.
    \item After a curve is drawn, the same player also places a new spot along it.
    \item The first player who cannot make a move loses the game.
\end{itemize}

Figure~\ref{fig_example_game} contains an example of a Sprouts game with two initial spots.
Here, the first player loses the game since the last two spots in the final position, which are both incident to less than three curves, cannot be connected without crossing another curve.

\begin{figure}[!htb]
\centering
\begin{tabular}{>{\centering\arraybackslash}m{0.28\linewidth}>{\centering\arraybackslash}m{0.28\linewidth}>{\centering\arraybackslash}m{0.28\linewidth}}
    \includegraphics[scale=1]{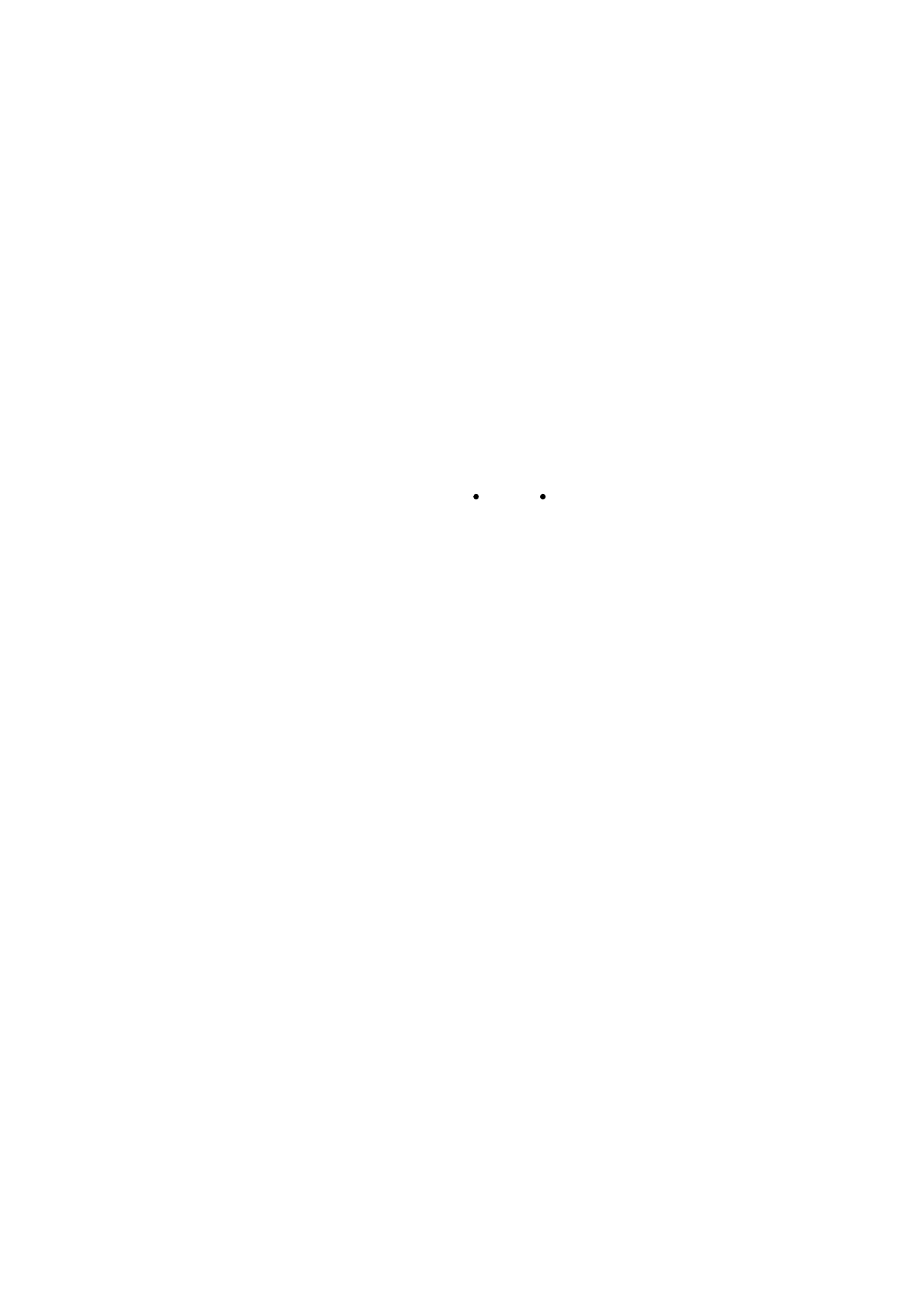} &    
    \includegraphics[scale=1]{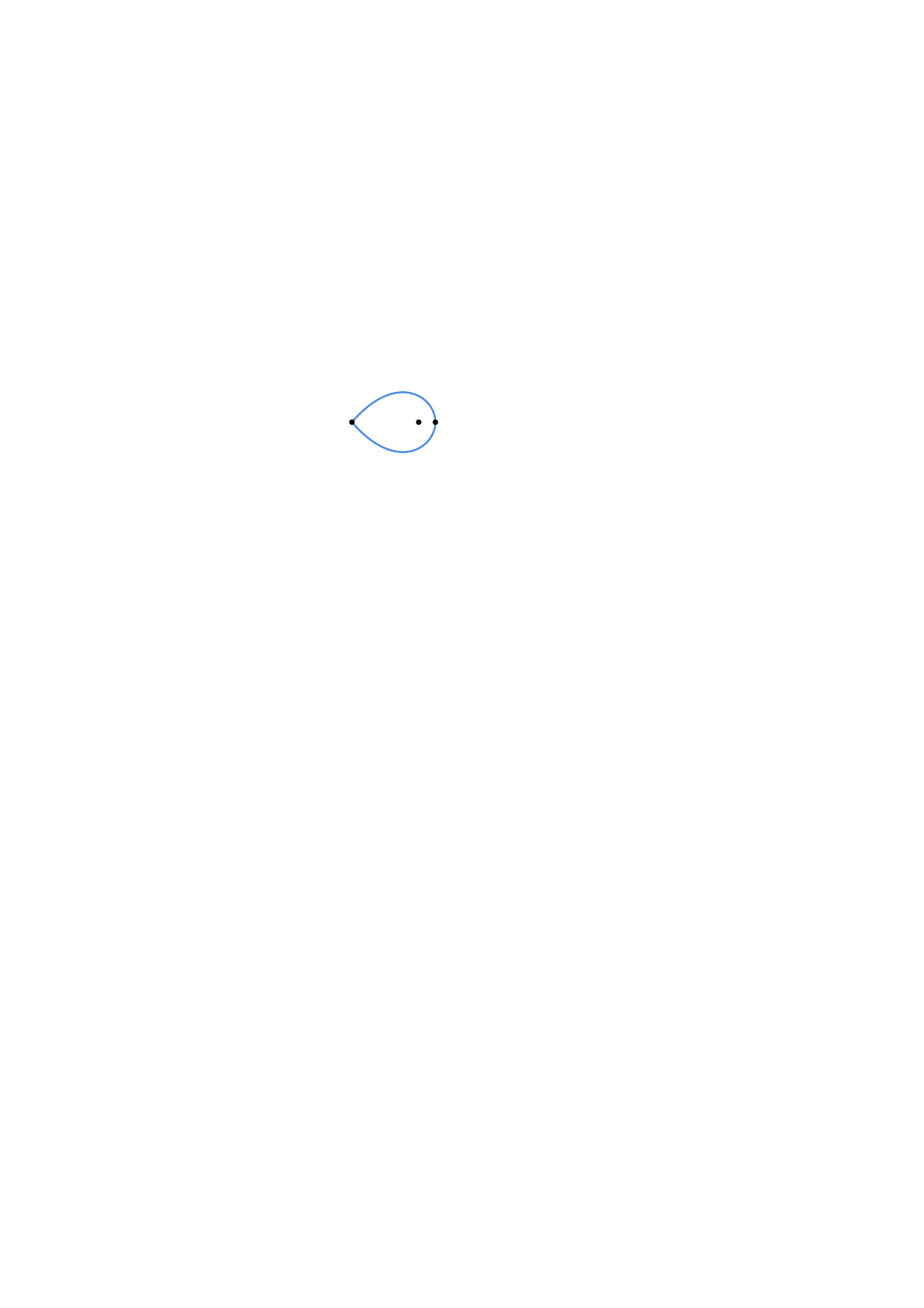} &
    \includegraphics[scale=1]{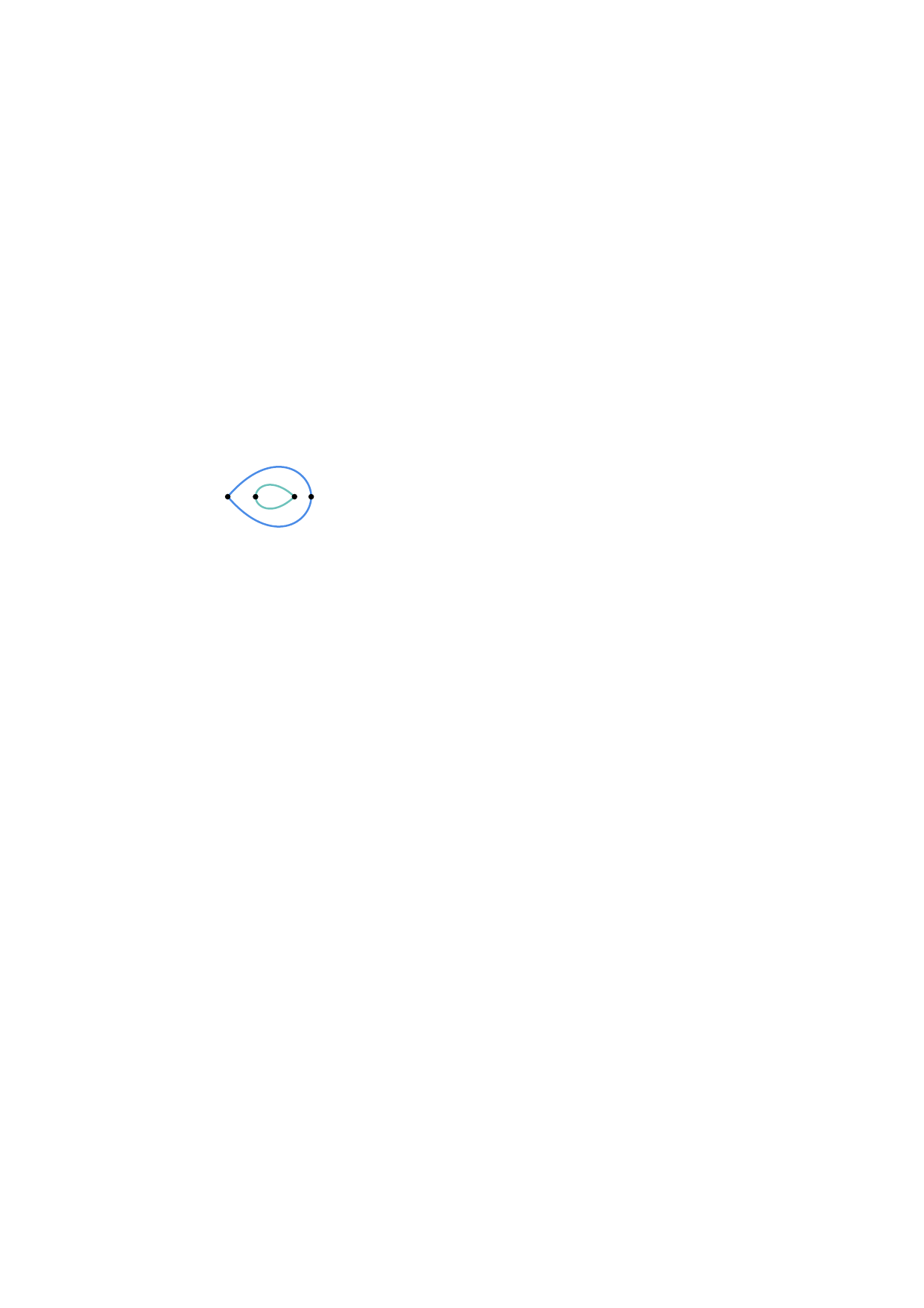} \\[6pt]
(1) & (2) & (3) \\[15pt]
\multicolumn{3}{c}{
    \begin{tabular}{>{\centering\arraybackslash}m{0.28\linewidth}>{\centering\arraybackslash}m{0.28\linewidth}}
        \includegraphics[scale=1]{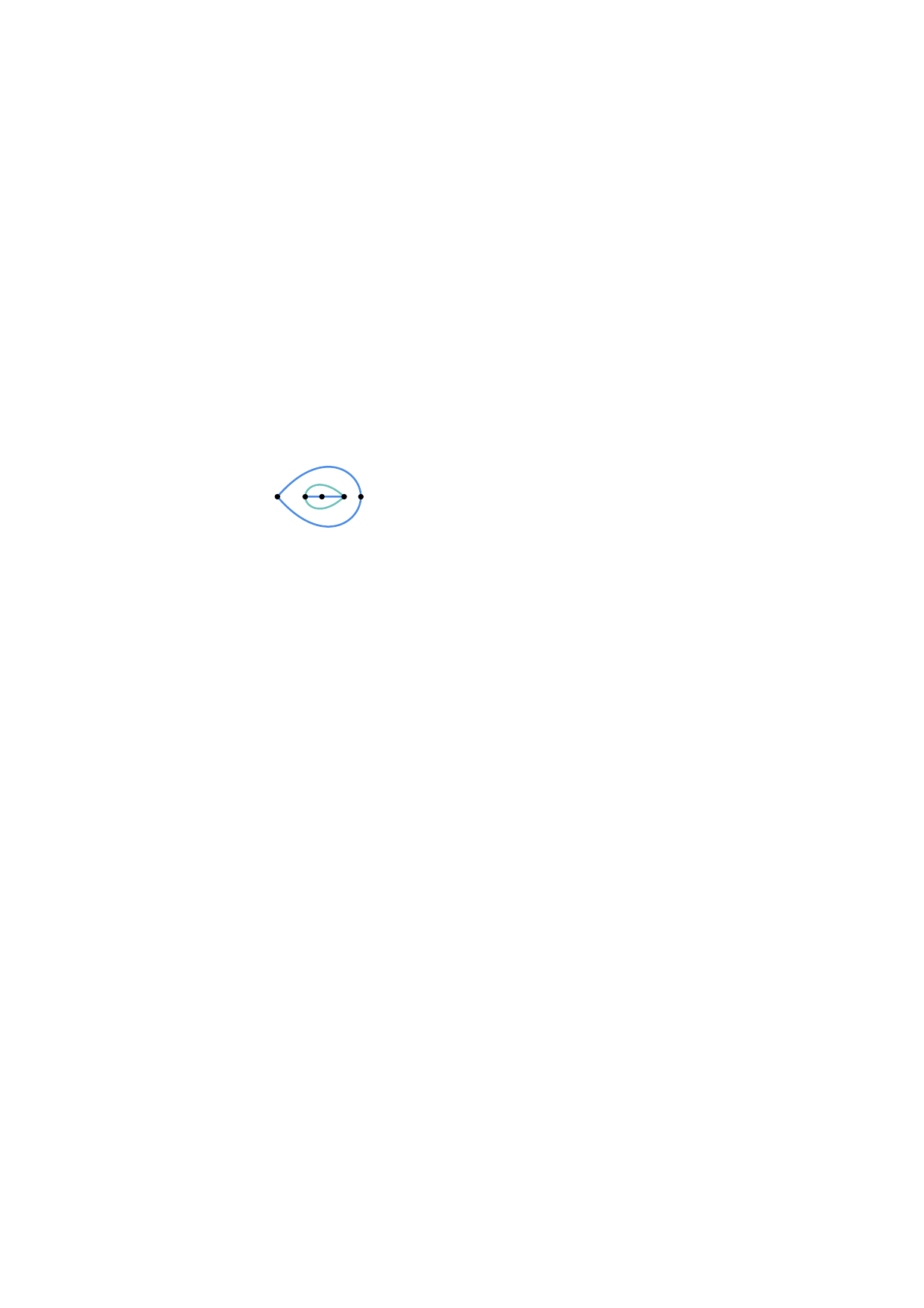} &
        \includegraphics[scale=1]{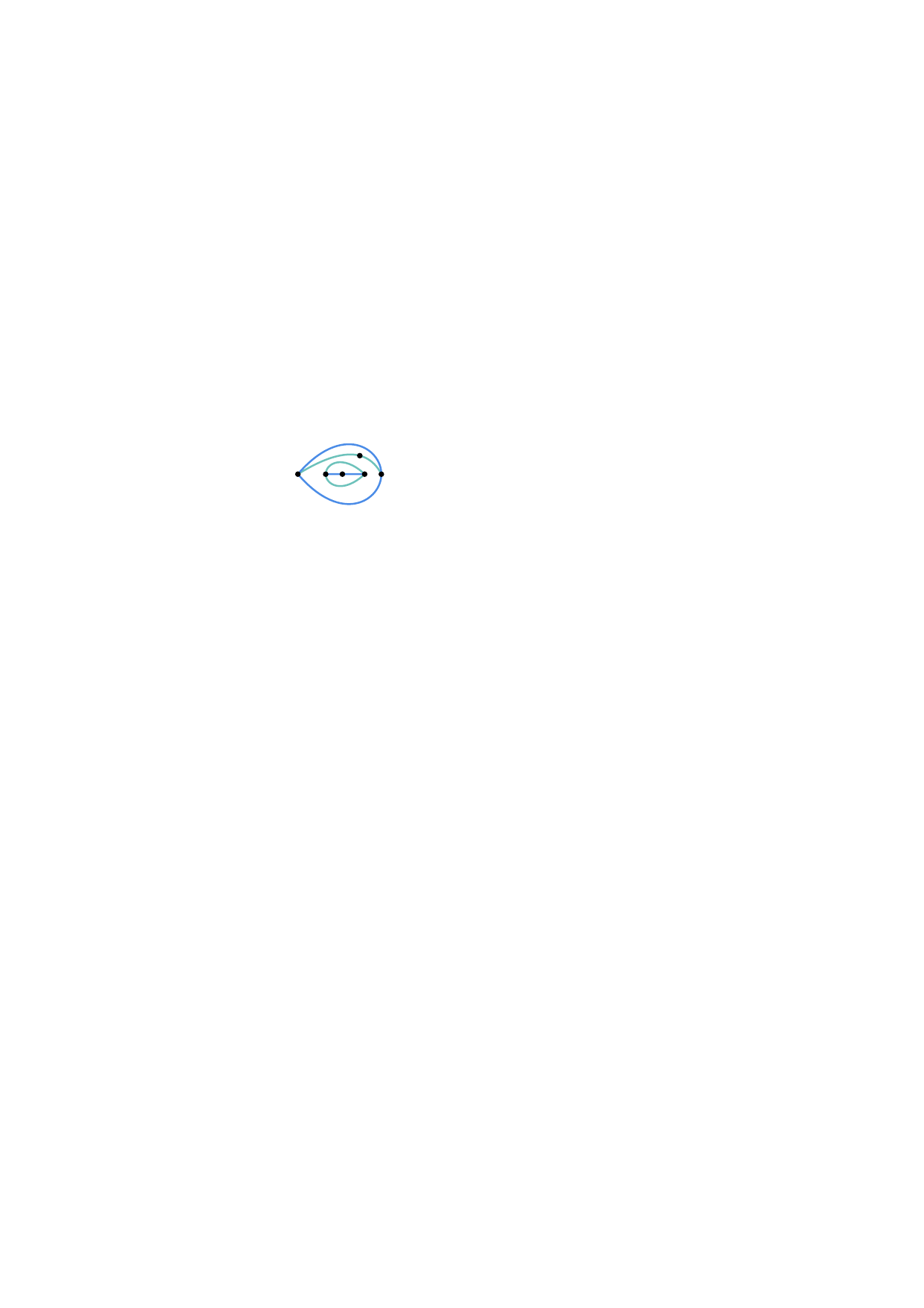} \\[6pt]
        (4) & (5)
    \end{tabular}
 }\\[6pt]
\end{tabular}
\caption{An example of a Sprouts game.}
\label{fig_example_game}
\end{figure}

Sprouts is an impartial game played under the normal play convention.
In particular, it ends after a finite number of moves, and there is no draw.
It follows that there is a winning strategy for either the first or the second player in every position.

\subsection{History and Related Work}

Sprouts was designed by British mathematicians John Horton Conway and Michael Paterson in 1967 with the intention to create a simple-to-play but difficult-to-analyze game~\cite{berlekamp2003winning, roberts2015genius}.

Since the early beginnings of Sprouts, researchers have tried to determine the outcomes of $n$-spot positions for as many values of $n$ as possible.
Conway himself found the outcomes for $n \leq 3$.
Mollison later proved the outcomes for $n$ equal to 4 and 5, as well as for $n = 6$, whose proof by hand took 49 pages \cite{gardner1967mathematical}.

\citet{applegate1991computer} created the first computer solver for Sprouts, which proved the outcomes for $n \leq 11$, and led the authors to formulate the Sprouts Conjecture.
Their solver used Alpha-Beta Pruning with the string representation of positions on which all subsequent solvers, including ours, were built.
In 2006, Josh Purinton created a computer program, called \emph{AuntBeast}, that played casual games with players from the \emph{World Game Of Sprouts Association} and was able to play a perfect game up to 14 spots; thus, newly solving the game for $n=12, 13,14$.

The previous state-of-the-art solver GLOP, created by \citet{lemoine2015nimber}, solved all $n$-spot positions for $n \leq 32$ in 2007.
In 2011, they extended the results for $n \leq 44$ plus three more values to $n = 53$; see Table~\ref{tab_new_records}.
After hearing about these achievements made by Lemoine and Viennot, Conway, one of the authors of Sprouts, reacted with the following disbelieving words~\cite{roberts2015genius}:

\begin{displayquote}
\emph{``I doubt that very much. They are basically saying they have done the impossible. If someone says they’ve invented a machine that can write a play worthy of Shakespeare, would you believe them? It’s just too complicated. If someone said they’d been having some success teaching pigs to fly... Though if they were doing that over in the field behind the Institute [for Advanced Study in Princeton], I would like to take a look.''}
\end{displayquote}

The critical improvement by Lemoine and Viennot compared to the previous solver by \citet{applegate1991computer} stands on the utilization of Grundy numbers to analyze independent subpositions separately and on human interventions to navigate the search outside of unpromising parts of a game tree.
Their algorithm originally relied on Alpha-Beta Pruning with a heuristic for ordering child-node exploration.
Later, Lemoine and Viennot replaced Alpha-Beta with a basic variant of Proof-Number Search, enabling them to achieve several new results.

\subsection{Solving Sprouts Positions}

\begin{table*}[t]
    \centering
    \begin{tabular}{|cccc|cccc|cccc|cccc|cccc|}

    \Xhline{1pt}
    \raisebox{1.4pt}[2.5ex][0pt]{\textbf{n}} & \raisebox{1.4pt}[2.5ex][0pt]{\textbf{out}} & \raisebox{1.4pt}[2.5ex][0pt]{\textbf{by}} & \raisebox{1.4pt}[2.5ex][0pt]{\textbf{size}} &
\raisebox{1.4pt}[2.5ex][0pt]{\textbf{n}} & \raisebox{1.4pt}[2.5ex][0pt]{\textbf{out}} & \raisebox{1.4pt}[2.5ex][0pt]{\textbf{by}} &  \raisebox{1.4pt}[2.5ex][0pt]{\textbf{size}} &
\raisebox{1.4pt}[2.5ex][0pt]{\textbf{n}} & \raisebox{1.4pt}[2.5ex][0pt]{\textbf{out}} & \raisebox{1.4pt}[2.5ex][0pt]{\textbf{by}} &  \raisebox{1.4pt}[2.5ex][0pt]{\textbf{size}} &
\raisebox{1.4pt}[2.5ex][0pt]{\textbf{n}} & \raisebox{1.4pt}[2.5ex][0pt]{\textbf{out}} & \raisebox{1.4pt}[2.5ex][0pt]{\textbf{by}} &  \raisebox{1.4pt}[2.5ex][0pt]{\textbf{size}} &
\raisebox{1.4pt}[2.5ex][0pt]{\textbf{n}} & \raisebox{1.4pt}[2.5ex][0pt]{\textbf{out}} & \raisebox{1.4pt}[2.5ex][0pt]{\textbf{by}} &  \raisebox{1.4pt}[2.5ex][0pt]{\textbf{size}} \\
    \Xhline{1pt}
     \conw{\rule{0pt}{2.2ex}1}{L}{2e0}    &    \glop{22}{W}{6e3}    &       \glopNew{43}{L}{1e6}         & \spots{64}{W}{3e6}        & \spots{85}{L}{2e7}              \\
     \conw{2}{L}{4e0}    &    \glop{23}{W}{4e3}    &       \glopNew{44}{L}{1e6}         & \spots{65}{W}{1e7}        & \spots{86}{L}{2e7}          \\
     \conw{3}{W}{7e0}    &    \glop{24}{L}{5e4}    &         \spots{45}{W}{3e6}         & \none{66}                 & \noneBorder{87}              \\
     \moll{4}{W}{2e1}    &    \glop{25}{L}{2e4}    &       \glopMid{46}{W}{2e5}         & \spots{67}{L}{1e7}        & \spots{88}{W}{2e7}          \\
     \moll{5}{W}{3e1}    &    \glop{26}{L}{4e4}    &          \glop{47}{W}{2e5}         & \spots{68}{L}{1e7}        & \parallelspots{89}{W}{1e9}  \\
     \moll{6}{L}{9e1}    &    \glop{27}{W}{3e5}    &         \spots{48}{L}{3e6}         & \none{69}                 & \noneBorder{90}              \\
      \ajs{7}{L}{2e2}    &    \glop{28}{W}{1e4}    &         \spots{49}{L}{3e6}         & \spots{70}{W}{3e6}        & \spots{91}{L}{2e7}          \\
      \ajs{8}{L}{3e2}    &    \glop{29}{W}{1e4}    &         \spots{50}{L}{3e6}         & \spots{71}{W}{1e7}        & \spots{92}{L}{2e7}          \\
      \ajs{9}{W}{1e1}    &    \glop{30}{L}{2e5}    &         \spots{51}{W}{1e7}         & \none{72}                 & \noneBorder{93}              \\
     \ajs{10}{W}{3e2}    &    \glop{31}{L}{5e4}    &         \spots{52}{W}{3e6}         & \spots{73}{L}{1e7}        & \parallelspots{94}{W}{1e9}  \\
     \ajs{11}{W}{2e2}    &    \glop{32}{L}{7e4}    &       \glopMid{53}{W}{8e5}         & \spots{74}{L}{1e7}        & \parallelspots{95}{W}{1e9}  \\
    \purr{12}{L}{1e3}    & \glopMid{33}{W}{1e6}    &         \spots{54}{L}{1e7}         & \none{75}                 & \noneBorder{96}              \\
    \purr{13}{L}{1e3}    &    \glop{34}{W}{3e4}    &         \spots{55}{L}{3e6}         & \spots{76}{W}{1e7}        & \parallelspots{97}{L}{1e9}  \\
    \purr{14}{L}{1e4}    &    \glop{35}{W}{3e4}    &         \spots{56}{L}{3e6}         & \spots{77}{W}{1e7}        & \parallelspots{98}{L}{1e9}  \\
    \glop{15}{W}{9e4}    & \glopMid{36}{L}{1e6}    & \parallelspots{57}{W}{1e9}         & \none{78}                 & \noneBorder{99}              \\
    \glop{16}{W}{1e3}    & \glopMid{37}{L}{1e5}    &         \spots{58}{W}{3e6}         & \spots{79}{L}{1e7}        & \parallelspots{100}{W}{1e9} \\
    \glop{17}{W}{6e2}    & \glopMid{38}{L}{1e5}    &         \spots{59}{W}{7e5}         & \spots{80}{L}{1e7}        & \noneBorder{101}             \\
    \glop{18}{L}{7e3}    & \glopNew{39}{W}{1e6}    & \parallelspots{60}{L}{1e9}         & \none{81}                 & \noneBorder{102}             \\
    \glop{19}{L}{9e3}    &    \glop{40}{W}{1e5}    &         \spots{61}{L}{3e6}         & \spots{82}{W}{1e7}        & \parallelspots{103}{L}{1e9} \\
    \glop{20}{L}{1e4}    &    \glop{41}{W}{2e5}    &         \spots{62}{L}{3e6}         & \spots{83}{W}{1e7}        & \parallelspots{104}{L}{1e9} \\
    \glop{21}{W}{8e4}    & \glopNew{42}{L}{1e6}    &   \none{63}                        & \none{84}                 & \noneBorder{105}\\
     \hline
    \end{tabular}
    \caption{Solved $n$-spot Sprouts positions. Each entry in the $n$th row indicates the solver (C = Conway, M = Mollison, A = \citet{applegate1991computer}, P = AuntBeast (Purinton 2006), G = GLOP~\cite{lemoine2015nimber}, S = SPOTS), the outcome (W = win, L = loss), and size of the proof measured by the number of computed Grundy numbers. Lighter and darker G-entries indicate positions solved by GLOP in 2007 and in 2010--2011, respectively. 
    Lighter and darker S-entries indicate positions solved by the sequential and parallel versions of our SPOTS solver, respectively.}
    \label{tab_new_records}
\end{table*}

We derived 32 new outcomes with sequential SPOTS and 10 more new outcomes of much more complex positions with parallel SPOTS.
Thus, we extended the known outcomes of $n$-spot positions by 42 new values in total, almost doubling the number to the final 89 known values.
Since the complexity does not grow monotonically, as positions with $n \equiv 3 \pmod{6}$ are particularly challenging, we established outcomes for all $n \leq 62$ and the $27$ least complex cases with $63 \leq n \leq 104$.
For a detailed list of all the currently known outcomes of $n$-spot positions, see Table~\ref{tab_new_records}.

All the newly computed outcomes agree with the Sprouts conjecture, which thus remains open.
Additionally, we confirm that the computed Grundy numbers of $n$-spot positions follow the extended Sprouts conjecture of \citet{lemoine2015nimber}, which states that the Grundy number of the $n$-spot position with $n = 0, 1, \text{or } 2 \pmod{6}$ equals to 0, and to 1 otherwise.

A server workstation equipped with an AMD EPYC 7302 processor (3.00 GHz, 32 cores) and 256 GB of RAM, running Debian GNU/Linux 12, was used for long-running sequential computations.
Massively parallel computations were performed on the \emph{Google Compute Engine} platform, part of the \emph{Google Cloud Platform}, which enables the creation of a large number of interconnected virtual machines (VMs).
We set up a cluster composed of a single standard VM for the master and multiple spot VMs for workers, with automatic restoration after each preemption.
All VMs were located in the \emph{us-central1} region and ran Ubuntu 22.04 LTS.
The standard VM used the \emph{c2d-highmem-32} instance type, providing 16 cores, 256 GB of RAM, and 256 GB of balanced persistent disk.
For workers, we allocated 16 spot VMs of the \emph{c2-standard-60} type, each with 30 cores, 240 GB of RAM, and 128 GB of balanced persistent disk.

After a month of computations on this cluster with 480 cores allocated to workers, we derived proofs of Sprouts positions that are approximately 1,000 times larger than the most complicated ones derived by the current state-of-the-art solver GLOP \cite{lemoine2015nimber,lemoine2011phdthesis}.
We computed around $10^9$ Grundy numbers in total, where one Grundy number corresponds to approximately 100 node expansions.

Notably, the results were achieved without fully utilizing the potential of our solver, as parameters were not always optimally tuned, and some workers were intermittently terminated during computation due to spot machine preemption.
The parallel SPOTS computations were executed on 480 CPU cores, primarily using the following parameter settings:
\texttt{workers} = 48, \texttt{iterations} = 10,000, \texttt{updates} = 1,000, \texttt{grouping} = 3, and \texttt{threads} = 10.

\paragraph{Code and Data.}
The project’s GitHub repository contains the source code for our parallel Sprouts solver, SPOTS. It also provides links to the Grundy number databases, which serve as certificates for newly solved positions.

All computation-demanding parts of the solver, such as the representation of a state and search algorithms, are implemented in C{+}{+}20~for performance reasons.
The first-level work distribution is implemented in \emph{Python}~$3.10$ using \emph{Ray}~$2.47.0$, a framework for distributed systems.
The master and the workers are coordinated using Ray, while they utilize the core functions implemented in C{+}{+}, exposed to Python using \emph{pybind11}~$2.10$.
The whole project is built using \emph{CMake}~$3.22$ with the compiler \emph{GCC}~$11.4$.

The SPOTS solver can compute the outcome of any given Sprouts position. Positions are provided as input using the string representation introduced by \citet{applegate1991computer} and later used by \citet{lemoine2015nimber}. 
In particular, the $n$-spot position is encoded as \texttt{0*}$n$.

\subsection{Estimating the Game Tree Complexity}

We describe how the estimates of the game tree complexity in Figure~\ref{fig_tree_complexity} were obtained.
For Checkers, Chess, Shogi, and Go, we followed a standard approach based on known estimates of the average branching factor $b$ and average depth $d$ of the game tree.
The complexity was then estimated as $b^d$.

To estimate the mean and variance of game tree complexity for Sprouts, we sampled 1,000 random games by selecting each child uniformly at random at each node along a path $\mathcal{P}$ from the root to a terminal node. 
Let $b_i$ denote the number of children of the $i$th node along $\mathcal{P}$.
We computed $\prod_{i=1}^k b_i$ for each path, where $k$ is the number of internal nodes in the path. 
The final estimate was then obtained by averaging these values across all the samples, with the range between the 1st and 99th percentiles as the measure of variance.

To estimate the complexity of the Sprouts tree with Grundy numbers, we used a similar method based on randomly sampled games, but with an important modification: decomposable nodes must be treated separately, as their total complexity is the sum of the complexities of subtrees of all their children. 
The estimation is therefore recursive, and instead of the path $\mathcal{P}$ in the tree, we obtain a randomly sampled subtree $\mathcal{T}$. 
As in our description of DFPN with GN, we first simplify the Sprouts tree by replacing Grundy nodes with atomic nodes, resulting in a tree that includes only atomic and decomposable nodes. 
The subtree $\mathcal{T}$ is then obtained by starting at the root, selecting a single child at every encountered atomic node uniformly at random, and by selecting all children of each encountered decomposable node.
After sampling $\mathcal{T}$, we initially set the complexity $C(t)$ of each terminal $t$ in $\mathcal{T}$ to be 1.
To compute the complexity $C(v)$ of the Sprouts subtree of an internal node $v$ of $\mathcal{T}$, we distinguish two cases.
If $v$ is atomic, then we let $C(v) = b(v) \cdot C(c)$, where $c$ is the child of $v$ in $\mathcal{T}$ and $b(v)$ is the number of children of $v$ in the Sprouts tree.
For decomposable $v$, we let $C(v) = \sum_c C(c)$, where the sum is taken over all the children $c$ of $v$ in the Sprouts tree.
The complexity estimate of the Sprouts tree with Grundy numbers is then the average of the values $C(r)$ of the root $r$ taken over 1,000 samples of $\mathcal{T}$, with the range between the 1st and 99th percentiles reported as the measure of variance.
During the sampling, we need to estimate the sizes of the Grundy numbers of the nodes.
However, these values are unknown.
Using the average values of the Grundy numbers in the computed proof trees of $n$-spot positions, we estimate the number of relevant children of Grundy nodes to determine their value.

\section{Proof of Theorem~\ref{thm-dfpn}}

Here, we prove Theorem~\ref{thm-dfpn} by showing that our choice of the thresholds in DFPN with GN guarantees that, for every node $v$ of $\mathcal{P}$, the MPN is in the subtree of $v$ if and only if the following three inequalities are satisfied: 
\begin{align}
\label{eq-thresholds}
    \begin{split}
    pt(v) &> pn(v)\\ 
    dt(v) &> dn(v),\\
    mt(v) &> \min \{pn(v) + ps(v), dn(v) + ds(v)\}.
    \end{split}
\end{align}
We also provide some intuition for how the thresholds are selected.

We first recall how the proof and disproof numbers are defined.
The proof and disproof numbers of an internal atomic node $v$ are given by
\begin{equation}
\label{eq-pndnAtomic}
pn(v) = \min_{c}{dn(c)} \;\;\;\text{ and }\;\;\; dn(v) = \sum_{c}{pn(c)},
\end{equation}
where the minimum and the sum are taken over the children $c$ of $v$.
If $u$ is a Grundy node, then it corresponds to an atomic position~$Q$ and we use $v_i$ to denote the child of $u$ representing $Q+\ast i$.
If $v_j$ is the last child of $u$ generated so far, then we set 
\begin{equation}
\label{eq-pnGrundy}
pn(u) = dn(u) = \min{\{pn(v_j), dn(v_j)\}}.
\end{equation}
Let $v$ be a decomposable internal node.
Then, $v$ corresponds to $P_1+\dots+P_k$, where $v'_k$ is an atomic node corresponding to $P_k + \ast n\oplus gn(P_1)\oplus\cdots\oplus gn(P_{k-1})$ and $u_1,\dots,u_k$ denote the nodes where each $u_i$ is a Grundy node corresponding to~$P_i$.
The proof and disproof numbers of $v$, are then given by
\begin{equation}
\label{eq-pndnDecomposable}
pn(v) = dn(v) = \sum_{i = 1}^k pn(u_i)
\end{equation}
if $v'_k$ has not yet been generated, and by
\begin{equation}
\label{eq-pndnDecomposableGenerated}
pn(v) = pn(v'_k) \;\;\;\text{ and }\;\;\; dn(v) = dn(v'_k)
\end{equation}
otherwise.

Now, let $v$ be a node from the path $\mathcal{P}$, and let $w$ be the next node to be selected. 
If $v$ is atomic, then we let $w'$ be its child with the second-lowest disproof number.
For any node~$v'$, we use $pn_0(v')$ and $dn_0(v')$ to denote the values of $pn(v')$ and $dn(v')$, respectively, before selecting $w$ in $v$.
Since all updates of proof and disproof numbers go along the path $\mathcal{P}$ and $w$ is the only child of $v$ on $\mathcal{P}$, we always have $dn_0(c) = dn(c)$ for every child $c \neq w$ of $v$.
Note that, by combining this fact with~\eqref{eq-pndnAtomic}, we get that an internal atomic $v$ satisfies
\begin{equation}
\label{eq-zeroDnAtomic}
dn_0(v) = dn(v) + pn_0(w) - pn(w).
\end{equation}

Similarly, we obtain a relationship between $pn_0(v)$ and $pn(v)$ for decomposable $v$ using~\eqref{eq-pnGrundy}.
In particular, if $v$ is decomposable and $v'_k$ has not yet been generated, then
\begin{align}
\label{eq-zeropnDecomposable}
\begin{split}
pn(v) &= dn(v) = \sum_c \min \{pn(c),dn(c)\} \\
&= pn_0(v) - \min \{pn_0(w),dn_0(w)\} \\
&+ \min \{pn(w),dn(w)\}.
\end{split}
\end{align}

We also recall the definitions of the thresholds of nodes in $\mathcal{P}$.
For atomic~$v$, the thresholds $pt(w)$, $dt(w)$, and $mt(w)$ with parameters $ps(w)$ and $ds(w)$ are defined as
\begin{align}
\label{eq-atomicThresholds}
    \begin{split}
            pt(w) &= dt(v) - dn_0(v) + pn_0(w),\\
            dt(w) &= \min \{pt(v), dn_0(w') + 1\},\\
            mt(w) &= mt(v),
    \end{split}
\end{align}    
where the parameters $ps(w)$ and $ds(w)$ are set to $ps(w) = ds(v) + dn_0(v) - pn_0(w)$ and $ds(w) = ps(v)$.
If the node $v$ is decomposable, then we set 
\begin{equation}
\label{eq-decomposableThresholdsGenerated}
pt(w)=pt(v), \;\;\; dt(w)=dt(v), \;\;\; mt(w)=mt(v)    
\end{equation}
if $v'_k$ has been generated.
If $v'_k$ has not been generated yet, we define
\begin{align}
\label{eq-decomposableThresholds}
    \begin{split}
            pt(w) &= dt(w) = \infty,\\
            mt(w) &= t(v) - pn_0(v) + \min \{pn_0(w), dn_0(w) \},
    \end{split}
\end{align}
where we let $ps(w) = ds(w) = 0$ and $t(v) = \min \{ pt(v),\allowbreak dt(v), mt(v) - \min \{ ps(v),ds(v) \} \}$.
For the root~$r$, we set $pt(r)=dt(r)=mt(r) =\infty$ and $ps(r)=ds(r)=0$.

To gain some insight about the choice~\eqref{eq-thresholdsW} of thresholds, note that the definition~\eqref{eq-atomicThresholds} of $pt(w)$ and $dt(w)$ for atomic $v$ is exactly the same as in DFPN without GN.
The first value in $dt(w)$ and the value of $pt(w)$ indicate that DFPN with GN must backtrack when the subtree of $v$ no longer contains the MPN.
The second value in $dt(w)$ indicates that when the MPN switches to the child $w'$.
The new threshold $mt(w)$ is needed because it follows from~\eqref{eq-zeropnDecomposable} that to have $pt(v) > pn(v)$ and $dt(v) > dn(v)$ for decomposable $v$ we need an upper bound on $\min\{pn(w),dn(w)\}$, which is eventually ensured by $mt(w)$ in combination with $ps(w)$ and $ds(w)$, which serve for shifting the values in the atomic node $v$.

\begin{proof}[Proof of Theorem~\ref{thm-dfpn}]
We proceed by induction on the depth $d$ of a node $v$ in $\mathcal{P}$ and prove that MPN is in the subtree of $v$ if and only if the inequalities~\eqref{eq-thresholds} are satisfied.
Since $v$ is in $\mathcal{P}$, it is an internal node.
We note that the MPN is in the subtree of $v$ if and only if all nodes $v'$ on the subpath of $\mathcal{P}$ from the root to $v$ satisfy the following condition: if $v'$ has atomic parent $p$, then $v'$ has the smallest disproof number among all children of $p$.

For the induction base, we have $d=0$, in which case our node is the root $r$.
If $r$ has not yet been solved, then the MPN is always in the subtree of $r$, and the inequalities~\eqref{eq-thresholds} are satisfied as $pt(r) = \infty > pn(r)$, $dt(r) = \infty > dn(r)$, and $\min \{pn(r) + 0, dn(r) + 0\} < \infty = mt(r)$.
If $r$ is solved, then obviously the MPN is not in the subtree of $r$, and the threshold inequalities are not satisfied, since $\infty < \infty$ is interpreted as false.

For the induction step, we assume $d \geq 1$ and prove that the MPN is in the subtree of the next selected node $w$ if and only if
\begin{align}
\label{eq-thresholdsW}
    \begin{split}
    pt(w) &> pn(w)\\ 
    dt(w) &> dn(w),\\
    mt(w) &> \min \{pn(w) + ps(w), dn(w) + ds(w)\}.
    \end{split}
\end{align}
We assume that the claim is true for the parent $v$ of $w$.
Note that the parent $v$ exists as $d \geq 1$.

Assume first that the MPN is in the subtree of $w$.
We show that the three inequalities~\eqref{eq-thresholdsW} are satisfied.
Since $w$ is a child of $v$, the subtree of $w$ is contained in the subtree of $v$, and therefore the MPN is in the subtree of $v$.
It then follows from the induction hypothesis that the inequalities~\eqref{eq-thresholds} are satisfied.

We now distinguish two cases.
First, we assume that $v$ is atomic.
Since the MPN is in the subtree of $w$, the node $w$ has the smallest disproof number among all children of $v$.
In particular, $dn(w) \leq dn(w') = dn_0(w')$, because the MPN is in the subtree of $w$.

To prove the first inequality in~\eqref{eq-thresholdsW}, $pt(w) = dt(v) - dn_0(v) + pn_0(w)$ by~\eqref{eq-atomicThresholds}.
Substituting for $dn_0(v)$ using~\eqref{eq-zeroDnAtomic}, we rewrite this equality as $pt(w) = dt(v) - dn(v) + pn(w)$.
Since $dt(v) > dn(v)$ by~\eqref{eq-thresholds}, we have $pt(w)>pn(w)$, which satisfies the first inequality of~\eqref{eq-thresholdsW}.

Second, we use $dt(w) = \min \{pt(v), dn_0(w') + 1\}$, which holds by~\eqref{eq-atomicThresholds}.
Using $dn(w) \leq dn(w') = dn_0(w')$ and $pt(v) > pn(v)$, which follows from~\eqref{eq-thresholds}, we get $dt(w) \geq \min \{pn(v)+1, dn(w) + 1\}$. 
Since $v$ is atomic, we have $pn(v) \geq dn(w)$ by~\eqref{eq-pndnAtomic}.
Altogether, this gives the second inequality $dt(w) > dn(w)$ from~\eqref{eq-thresholdsW}.

To prove the last inequality of~\eqref{eq-thresholdsW}, we first show that $mt(w)=mt(v)$ and that $\min \{pn(v) + ps(v), dn(v) + ds(v)\}=\min \{pn(w) + ps(w), dn(w) + ds(w)\}$.
First, we have $mt(w)=mt(v)$ by~\eqref{eq-atomicThresholds}.
Also, $ps(w) = ds(v) + dn_0(v)-pn_0(w)$, which becomes $ps(w) = ds(v) + dn(v)-pn(w)$ after substituting for $dn_0(v)$ using~\eqref{eq-zeroDnAtomic}.
So, we get $pn(w) + ps(w) = dn(v) + ds(v)$.
Moreover, we have $ds(w) = ps(v)$ and so $dn(w) + ds(w) = dn(w) + ps(v)$.
Using the definition~\eqref{eq-pndnAtomic} of $pn(v)$, we get $pn(v)=dn(w)$, since $w$ has the smallest disproof numbers among all children of the atomic node $v$, and so we get the desired equality $dn(w) + ds(w) = pn(v) + ps(v)$.
Thus, we indeed have $mt(w)=mt(v)$ and $\min \{pn(v) + ps(w), dn(w) + ds(w)\}=\min \{pn(w) + ps(w), dn(w) + ds(w)\}$.
Then we are done, as the third inequality~\eqref{eq-thresholds} is equivalent to the third inequality in~\eqref{eq-thresholdsW}.

\medskip

Now, assume that $v$ is decomposable.
We distinguish two subcases, depending on whether the last child $v'_k$ of $w$ has been generated.

Assume first that $v'_k$ has been generated.
Then $w=v'_k$ by the choice of the selected node in the algorithm.
According to~\eqref{eq-pndnDecomposableGenerated}, we then have $pn(v)=pn(w)$ and $dn(v)=dn(w)$.
Moreover, $pt(v)=pt(w)$, $dt(v)=dt(w)$, and $mt(v)=mt(w)$ by~\eqref{eq-decomposableThresholdsGenerated}.
Since all values are the same for $v$ and $w$, all inequalities from~\eqref{eq-thresholdsW} follow immediately from~\eqref{eq-thresholds}.

For the other subcase, assume that $v'_k$ has not yet been generated.
Then, the first two inequalities in~\eqref{eq-thresholdsW} are trivially satisfied, as $pt(w)=dt(w)=\infty$ while $pn(w)$ and $dn(w)$ are finite, since the subtree of $w$ contains the MPN.
To prove the third inequality in~\eqref{eq-thresholdsW}, it suffices to show that $mt(w)>\min\{pn(w),dn(w)\}$, as $ps(w)=ds(w)\allowbreak =0$ for decomposable $v$.
We recall that 
\[mt(w) = t(v) - pn_0(v) + \min \{pn_0(w), dn_0(w) \},\] where 
\[t(v) = \min \{ pt(v), dt(v), mt(v) - \min \{ ps(v),ds(v) \} \}.\]

We now prove that $t(v) > pn(v)=dn(v)$.
From~\eqref{eq-pndnDecomposable}, we obtain $pn(v)=dn(v)$.
Using~\eqref{eq-thresholds}, we estimate the first two terms in $t(v)$ from below as $pt(v)>pn(v)$ and $dt(v)>dn(v)=pn(v)$.
To estimate the third term $mt(v) - \min \{ ps(v),ds(v) \}$ in $t(v)$ with $pn(v)$, we first expand $mt(v)$ as
\begin{align*}
\begin{split}
mt(v) &= \min\{pn(v)+ps(v),dn(v)+ds(v)\}\\
&= pn(v) + \min\{ps(v),ds(v)\}\,
\end{split}
\end{align*}
where we used $pn(v)=dn(v)$.
The term $\min\{ps(v),\allowbreak ds(v)\}$ then cancels out in $t(v)$ and we indeed obtain $t(v)>pn(v)$.

Using $t(v)>pn(v)$, we now have 
\[mt(w) > pn(v) - pn_0(v) + \min \{pn_0(w), dn_0(w)\}.\]
To finish the proof of $mt(w)>\min\{pn(w),dn(w)\}$, we express $pn(v)$ in terms of~\eqref{eq-zeropnDecomposable}, obtaining
\begin{align*}
\begin{split}
mt(w) &> pn_0(v) - \min \{pn_0(w),dn_0(w)\} \\
&+ \min \{pn(w),dn(w) - pn_0(v)\} \\
&+ \min \{pn_0(w), dn_0(w)\}\\
&= \min \{pn(w),dn(w) - pn_0(v)\},
\end{split}
\end{align*}
which finishes the proof of the last inequality in~\eqref{eq-thresholdsW}.

\bigskip

For the other implication, assume that the inequalities~\eqref{eq-thresholdsW} are satisfied.
We again distinguish two cases depending on the type of~$v$.

First, we assume that $v$ is atomic.
We aim to show that the MPN lies in the subtree of $w$.
To do this, it suffices to show that the inequalities~\eqref{eq-thresholds} hold, as this implies, by the induction hypothesis, that the MPN is in the subtree of $v$.
Since $w$ is the selected node, it is the child of $v$ with the smallest disproof number at the time of its selection.
That is, $dn_0(w) = \min_c dn_0(c)$ where the minimum is taken over all children $c$ of $v$.
We want to show that $w$ is also the child of $v$ with the smallest disproof number at all steps where $w$ is $\mathcal{P}$, as then the MPN indeed lies in the subtree of $w$.
This follows from the definition~\eqref{eq-atomicThresholds} of $dt(w)$, which implies $dt(w) \leq dn_0(w') + 1$, and thus $dn(w) < dt(w) \leq dn_0(w')+1=dn(w')+1$, where we use $dn_0(c)=dn(c)$ for every child $c \neq w$ o $v$ and the fact that $w'$ had the second-smallest disproof number among the children of $v$ at the time of selecting $w$ as the next node.

To prove the first inequality in~\eqref{eq-thresholds}, the definition~\eqref{eq-atomicThresholds} of $dt(w)$ gives $dt(w) = \min \{pt(v), dn_0(w') + 1\}$.
In particular, $pt(v) \geq dt(w)$.
We also have $dn(w) \geq pn(v)$, which follows from~\eqref{eq-pndnAtomic} and from the fact that $w$ is a child of $v$.
By the second inequality in~\eqref{eq-thresholdsW}, we then obtain $pt(v) \geq dt(w) > dn(w) \geq pn(v)$, which implies the first inequality in~\eqref{eq-thresholds}.

Second, we expand the definition~\eqref{eq-atomicThresholds} of $pt(w)$ to obtain $pt(w) = dt(v) - dn_0(v) + pn_0(w)$.
Using~\eqref{eq-zeroDnAtomic}, we rewrite this equality as $pt(w) = dt(v) - dn(v) + pn(w)$.
By the first inequality in~\eqref{eq-thresholdsW}, we then have $pt(w) = dt(v) - dn(v) + pn(w) > pn(w)$.
Equivalently, the second inequality $dt(v) > dn(v)$ in~\eqref{eq-thresholds} is true.

Finally, to prove the third inequality in~\eqref{eq-thresholds}, we first use the same arguments as in the proof of the first implication, and we derive $mt(w)=mt(v)$ and $\min \{pn(v) + ps(v), dn(v) + ds(v)\}=\min \{pn(w) + ps(w), dn(w) + ds(w)\}$.
The third inequality in~\eqref{eq-thresholdsW} is then equivalent to the third inequality in~\eqref{eq-thresholds}, and we are done.

\medskip

Now, assume that $v$ is decomposable.
We still aim to show that the MPN lies in the subtree of $w$.
Again, it suffices to show that the inequalities~\eqref{eq-thresholds} hold.
The induction hypothesis then implies that the MPN is in the subtree of $v$ and, since the selected node $w$ is the only generated child of $v$ that has not yet been solved, we see that the MPN is in the subtree of $w$.
We again distinguish two subcases, depending on whether $v'_k$ has been generated or not.

Assume first that $v'_k$ has been generated.
Then, as before, all values in~\eqref{eq-thresholds} and~\eqref{eq-thresholdsW} are the same, and thus all the inequalities in~\eqref{eq-thresholds} are satisfied.

For the other subcase, we assume that $v'_k$ has not yet been generated.
We prove all three inequalities in~\eqref{eq-thresholds} at the same time.
The third inequality in~\eqref{eq-thresholdsW} gives the following estimate:
\begin{align*}
\begin{split}
mt(w) &> \min \{pn(w) + ps(w), dn(w) + ds(w)\}\\
&=\min\{pn(w),dn(w)\},
\end{split}
\end{align*}
where we used $ps(w)=ds(w)=0$ for decomposable $w$.
We now expand the definition of $mt(w)$ as follows: 
\[mt(w) = t(v) - pn_0(v) + \min \{pn_0(w), dn_0(w) \},\] where 
\[t(v) = \min \{ pt(v), dt(v), mt(v) - \min \{ ps(v),ds(v) \} \}.\]
Using~\eqref{eq-zeropnDecomposable}, the expression of $mt(w)$ becomes
\begin{align*}
\begin{split}
mt(w) &= t(v) - pn(v) - \min\{pn_0(w),dn_0(w)\}\\
&+ \min\{pn(w),dn(w)\} + \min \{pn_0(w), dn_0(w)\}\\
&= t(v) - pn(v) + \min\{pn(w),dn(w)\}.
\end{split}
\end{align*}
Therefore, we know that
\[t(v) - pn(v) + \min\{pn(w),dn(w)\} > \min\{pn(w),dn(w).\]
This can be simplified as $t(v) > pn(v)$.
Thus, using the expression of $t(v)$, we derive 
\[\min \{ pt(v), dt(v), mt(v) - \min \{ ps(v),ds(v) \} \} > pn(v).\]
Since $pn(v)=dn(v)$ by~\eqref{eq-pndnDecomposable}, this implies the first two inequalities in~\eqref{eq-thresholds}.
For the third inequality, we obtained that 
\[mt(v) > pn(v) + \min \{ ps(v),ds(v)\}.\]
However, since $pn(v)=dn(v)$, we can rewrite this as
\[mt(v) > \min \{pn(v) + ps(v), dn(v) + ds(v)\},\]
obtaining the last inequality in~\eqref{eq-thresholds}.

This finishes the induction step and the proof of Theorem~\ref{thm-dfpn}.
\end{proof}

\begin{figure*}[!htb]
  \centering
  
  \begin{subfigure}[b]{0.48\textwidth}
    \centering
    \includegraphics{img/third_experiment.pdf}
    \label{fig:sub1}
  \end{subfigure}
  \hfill
  \begin{subfigure}[b]{0.42\textwidth}
    \centering
    \includegraphics{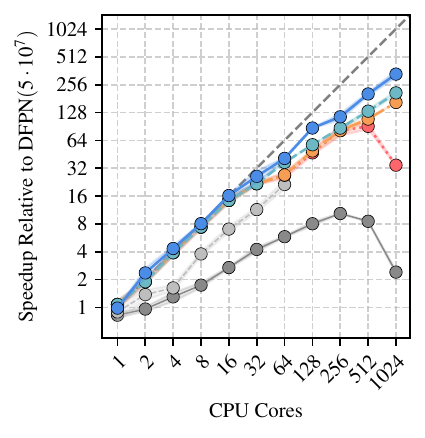}
    \label{fig:sub2}
  \end{subfigure}
  
  \caption{Impact of GN synchronization (GN Sync), grouping (Gr), second-level parallelization (Th), and the child-ordering heuristic (Hr) on the final speedup of PNS-PDFPN relative to DFPN($5 \cdot 10^7$).
Measured on the 47-spot position. 
The semi-log scale, with a logarithmic $x$-axis and a linear $y$-axis, is shown on the left; the log-log scale, with both axes logarithmic, is shown on the right.}
  \label{fig:main}
\end{figure*}

\section{Parameter Analysis}

We begin by recalling the final scaling efficiency of PNS-PDFPN with optimally tuned parameters, as shown in Figure~\ref{fig:main}, where we also include a logarithmic scale for a more detailed evaluation.
In this section, we analyze the optimal parameter settings of PNS-PDFPN by examining the \emph{search overhead}, defined as the ratio of node expansions performed by PNS-PDFPN to those performed by a sequential DFPN.
To capture the synchronization and communication overhead resulting from frequent interactions with the master, we also measure the \emph{worker utilization}, defined as the average proportion of time that workers are actively assigned to a job.

A cluster consisting of 25 nodes, each equipped with 2× AMD EPYC 9474F processors (3.60 GHz, 48 cores per processor), 1536 GB of RAM, and 2× 7 TB NVMe drives, interconnected via 10 Gbit/s Ethernet, was used for distributed experiments.
An exhaustive search over all reasonable parameter settings of PNS-PDFPN was performed using up to 512 CPU cores.
Due to limited access to the cluster, only the most promising experiments were computed for the 1024-core setting.
To ensure reproducibility of the random sampling used for breaking ties in child selection, random seeds 1, 2, and 3 were fixed.

\subsection{Analysis of Retention and Sharing}

In Figure~\ref{fig_second_experiment_full}, we revisit the comparison between PNS-PNS and PNS-DFPN with varying levels of state retention and key result sharing (Grundy numbers, in the case of impartial games).
We observe that the poor speedup of PNS-PNS without Grundy number retention is primarily due to substantial search overhead, as workers repeatedly recompute results that they have discarded before. 
The utilization of workers is also slightly lower than in the other settings.
We attribute this to increased traffic between workers and the master, as the workers are less able to solve the children of the assigned leaf node~$\ell$ immediately. As a result, they send back a larger number of unproven children, which must then be expanded in the master tree.

\begin{figure*}[t]
\centering
\includegraphics[scale=1]{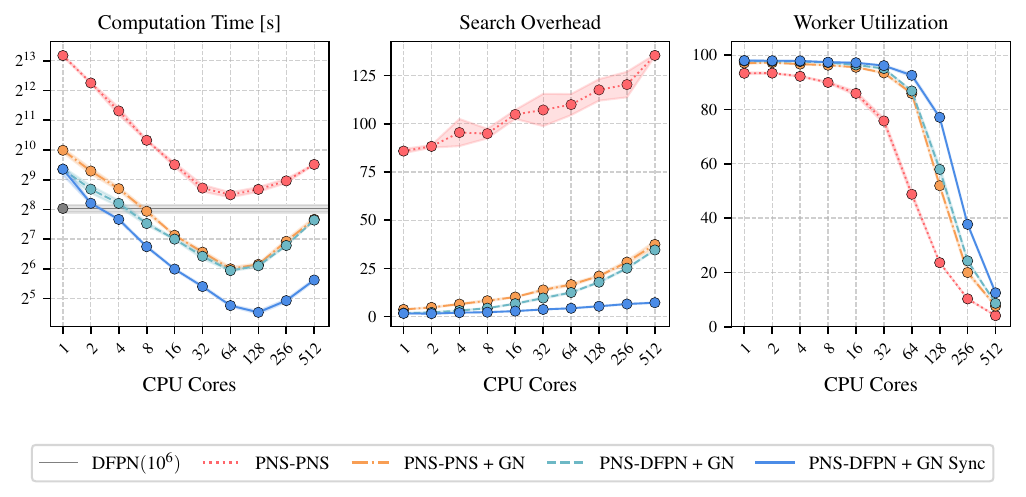}
\caption{Comparison of PNS-PNS and PNS-DFPN, including the impact of retention of Grundy numbers (GN) in workers and their synchronization (GN Sync). All variants are run on the 29-spot position with 100 \texttt{iterations}, 100 \texttt{updates}, no grouping, and no second-level parallelization.}
\centering
\label{fig_second_experiment_full}
\end{figure*}

Retaining Grundy numbers already leads to a substantial reduction in search overhead, resulting in improved speedup.
The same effect is observed when proof and disproof numbers are also retained by using DFPN workers at the second level.
The search overhead is further reduced when all derived Grundy numbers are shared among workers, without compromising worker utilization. 
This highlights the suitability of Grundy numbers as key results to share, as they are inexpensive to distribute and widely reusable.
Thus, PNS-DFPN with Grundy number synchronization achieves superior scaling efficiency compared to its unsynchronized variant, a difference that becomes even more pronounced for the larger position shown in Figure~\ref{fig:main}.

Also, note that beyond a certain point, adding more workers to the algorithm stops being beneficial, as the overhead on the master grows significantly. 
This is evident from the low utilization of the workers.
In the following subsection, we explain how to address this limitation.

\subsection{Optimal Parameter Settings}

We analyze the setting of the following four parameters: \texttt{iterations}, the maximum number of expansions per job; \texttt{updates}, the number of iterations between each update sent to the master; \texttt{grouping}, the number of workers grouped within a single node; and \texttt{threads}, the number of threads assigned to PDFPN per worker.

\paragraph{Updates.}
First, in Figure~\ref{fig_updates_iterations}, we show how setting the number of \texttt{updates} in PNS-PDFPN with GN synchronization, no grouping, and no second-level parallelization affects the scaling.
We observe that a lower number of \texttt{updates} naturally reduces worker utilization, since more frequent synchronization of Grundy numbers and tree updates increase the overhead on the master.
On the other hand, with smaller values of \texttt{updates}, the search overhead increases at a slower rate, as the workers are more synchronized due to more frequent GN synchronization.
Therefore, a reasonable trade-off must be made between worker utilization and search overhead.
Also, notice that increasing \texttt{updates} above 1,000 does not increase the worker utilization, which causes the drop in speedup.
Hence, setting \texttt{updates} to 1,000 almost always achieves the best scaling.

\paragraph{Iterations.}
When the \texttt{updates} parameter is fixed,~the~to\-tal job length can be optimized by adjusting the \texttt{iterations} parameter; see Figure~\ref{fig_updates_iterations}.
With longer jobs, workers switch less frequently between potentially very different parts of the tree, allowing them to better utilize locally stored proof and disproof numbers.
However, a more focused search leads to less exploration of the master tree, which may result in computations in suboptimal parts of the tree.
Therefore, we observe that larger values of \texttt{iterations} benefit PNS-PDFPN when using a large number of workers, as a sufficient number of workers can guarantee sufficient exploration.
The opposite effect is visible with a low number of workers, where short-term jobs achieve better scaling.

\begin{figure*}[!htb]
\centering
\includegraphics[scale=1.02]{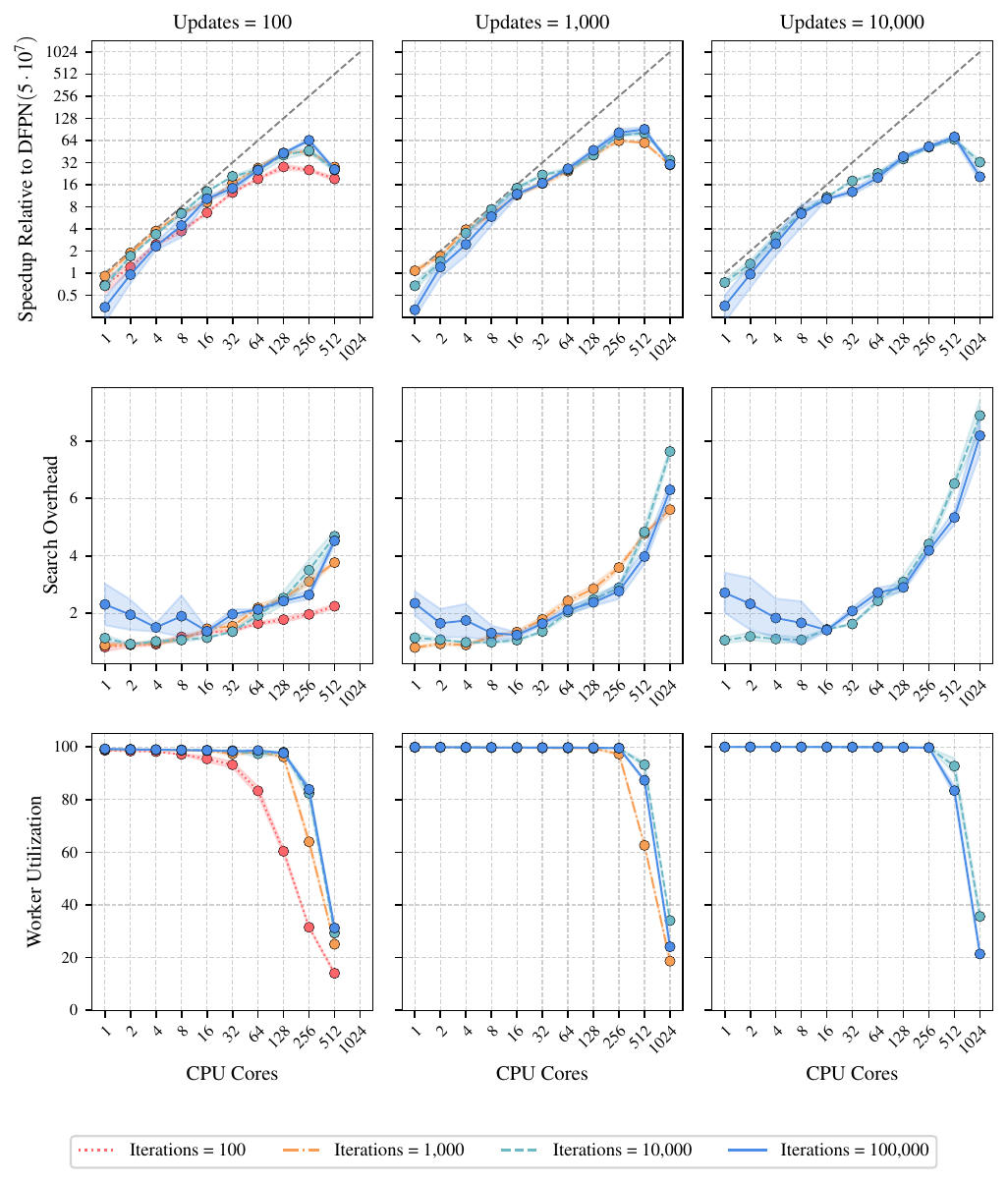}
\caption{Analysis of the impact of \texttt{iterations} and \texttt{updates} settings on the scaling of PNS-PDFPN with GN synchronization, no grouping, and no second-level parallelization.}
\centering
\label{fig_updates_iterations}
\end{figure*}

\paragraph{Grouping.}
As previously observed, beyond a certain po\-int, adding more workers no longer accelerates the algorithm due to excessive overhead on the master.
One way to reduce overhead and improve worker utilization is to group workers located on the same node.
This allows workers to share the local database of key results (Grundy numbers), which reduces memory consumption and allows instant synchronization of key results between workers. 
This slightly decreases the search overhead as shown in Figure~\ref{fig_grouping_updates_iterations}, where we set \texttt{grouping} to 32 (note that the size of \texttt{grouping} is limited by the number of cores of a node).
Moreover, fewer redundant key results derived in parallel by multiple workers are shared, which reduces overhead on the master and consequently improves worker utilization.
Therefore, grouping workers enables PNS-PDFPN to scale efficiently with up to at least 1024 CPU cores.

\begin{figure*}[t]
\centering
\includegraphics[scale=1]{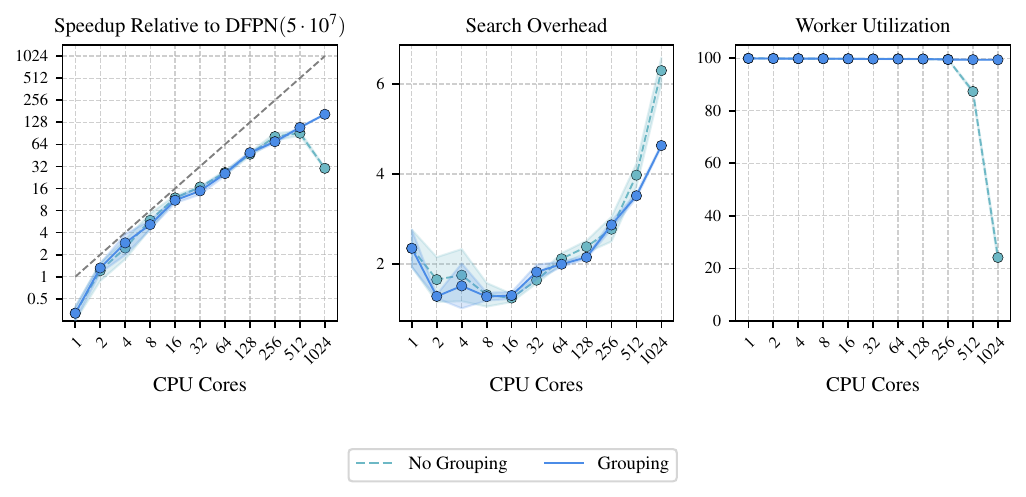}
\caption{Analysis of the impact of \texttt{grouping} settings on the scaling of PNS-PDFPN with GN synchronization and no second-level parallelization.
\texttt{Grouping} is set to its maximum (up to 32) and \texttt{updates} to 1,000.
}
\centering
\label{fig_grouping_updates_iterations}
\end{figure*}

\paragraph{Threads.}
Finally, in Figure~\ref{fig:kaneko_comparison}, we examine how the parameter \texttt{threads}, responsible for second-level parallelization, affects scaling efficiency.
In general, similar to configuring the \texttt{iterations} parameter, stronger second-level parallelization is beneficial only when a larger number of CPU cores is used, as there are already enough workers to ensure sufficient exploration of nodes in the master tree.
This effect is even stronger if the child-ordering heuristic is used to select the child in the case of tied disproof numbers.
In this case, the algorithm can reduce the exploration even more and rely on the heuristic by further increasing the values of \texttt{iterations} and \texttt{threads}, resulting in a more localized search; recall Table~\ref{tab:heuristicEvaluation}, which shows the optimal parameter settings with and without the heuristic.

\begin{figure*}[t]
\centering
\textbf{(a) Without heuristic} \\[0.5em]
\includegraphics[scale=1]{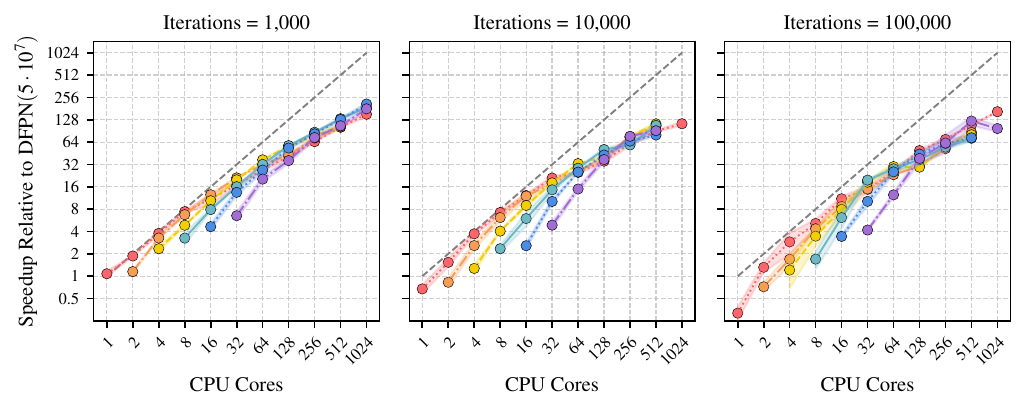}

\vspace{2em}

\textbf{(b) With heuristic} \\[0.5em]
\includegraphics[scale=1]{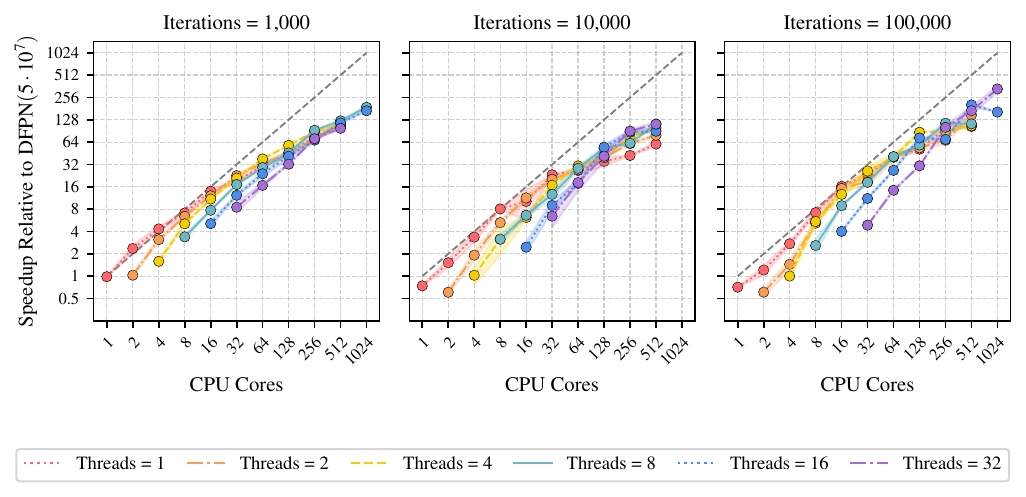}

\caption{Analysis of the impact of \texttt{threads} settings on the scaling of PNS-PDFPN with GN synchronization. Two variants of PNS-PDFPN are evaluated: one without the child-ordering heuristic (a) and one with it (b). \texttt{Grouping} is set to its maximum (up to 32).}
\label{fig:kaneko_comparison}
\end{figure*}

\subsection{Complete Statistics of PNS-PDFPN Scaling}

In Table~\ref{tab:final_full_values}, we present statistics collected for PNS-PDFPN with the child-ordering heuristic after applying the optimal parameter settings, extending Table~\ref{tab:heuristicEvaluation}.
We measured the following seven values on the 47-spot position: the number of nodes in the master tree (M-Nodes), the average number of nodes in workers' transposition tables (W-Nodes), the total number of computed Grundy numbers (T-GN), the total number of iterations (T-It), the worker utilization (Util), the computation time (Time), and the measured speedup.

We observe that worker utilization remains above $99\%$ in all measured instances, up to 1024 CPU cores.
The search overhead increases slightly, from approximately 0.94 with 1 core to 2.50 with 1024 cores.
The speedup continues to grow, reaching $332.97 \pm 26.80$ at 1024 cores.
Measurements were made with respect to DFPN($5 \cdot 10^7$) as the baseline.
Note that reporting the speedup of PNS-PDFPN with respect to PNS-PDFPN with 1 core almost does not change the reported speedup, since both PNS-PDFPN and DFPN have almost identical running times.

We believe that the algorithm can scale well even beyond 1024 cores, as PNS-PDFPN seems to scale effectively as long as second-level parallelization can be increased.
This may be feasible, given that PDFPN scales well on up to 64 cores.

\begin{table*}[!htb]
    \centering
    \begin{tabular}{rrrrrrrrrr}
        \toprule
        Cores & \multicolumn{2}{c}{Parameters} & \multicolumn{7}{c}{PNS-PDFPN + Heuristic} \\
        \cmidrule(r){2-3}
        \cmidrule(lr){4-10}
        & \multicolumn{1}{c}{It} & \multicolumn{1}{c}{Th} & \multicolumn{1}{c}{M-Nodes} & \multicolumn{1}{c}{W-Nodes} & \multicolumn{1}{c}{T-GN} & \multicolumn{1}{c}{T-It} & \multicolumn{1}{c}{Util} & \multicolumn{1}{c}{Time [s]} & \multicolumn{1}{c}{Speedup} \\
        \cmidrule(r){1-10}
    DFPN&  \multicolumn{1}{c}{---} &\multicolumn{1}{c}{---}&15M\sem{1.6M}  &\multicolumn{1}{c}{---}   &268k\sem{30k\;}  &17.2M\sem{1.9M}   &\multicolumn{1}{c}{---}&17.6k\sem{2.5k}  &\multicolumn{1}{c}{---}   \\
   \midrule
   1&  1k &1&609k\sem{0.0\;}  &14.3M\sem{0.0\;\;\;\,}   &199k\sem{0.0\;\;}  &16.3M\sem{0.0\;\;\;}   &99.8\sem{0.0}&17.9k\sem{4.29\,}  &{\bfseries 0.99}\sem{0.00}   \\
   2&  1k &1&500k\sem{56k}&6.5M\sem{665k\;} &159k\sem{15k\;}&14.4M\sem{1.5M}  &99.7\sem{0.0}&7.67k\sem{872\,} &{\bfseries 2.36}\sem{0.28}   \\
   4&  1k &1&540k\sem{40k}&3.7M\sem{303k\;} &152k\sem{13k\;}&16.2M\sem{1.3M}  &99.7\sem{0.0}&4.11k\sem{285\,} &{\bfseries 4.34}\sem{0.32}   \\
   8&100k &1&64k\sem{770} &1.8M\sem{11.8k}  &130k\sem{4.0k} &16.0M\sem{0.2M}&99.7\sem{0.0}&2.19k\sem{24.9}  &{\bfseries 8.06}\sem{0.09}   \\   
  16&100k &2&6.4k\sem{223}  &0.9M\sem{35.6k}&150k\sem{6.1k} &17.6M\sem{0.7M}&99.7\sem{0.0}&1.09k\sem{34.9}  &{\bfseries 16.22}\sem{0.53}  \\
  32&100k &1&1.9k\sem{294}  &1.5M\sem{231k\;} &93.8k\sem{17k\;} &16.3M\sem{2.5M}  &99.7\sem{0.0}&707\sem{112\;}&{\bfseries 26.13}\sem{3.87}  \\
  64&100k &2&1.5k\sem{0.3\;}    &1.5M\sem{57.5k}  &90.7k\sem{3.2k}  &18.5M\sem{0.7M}&99.7\sem{0.0}&432\sem{12.2} &{\bfseries 40.86}\sem{1.13}  \\
 128&100k &4&3.3k\sem{1.2\;}    &0.5M\sem{5.63k} &90.2k\sem{1.3k}  &20.9M\sem{0.2M}&99.7\sem{0.0}&202\sem{1.90}  &{\bfseries 87.02}\sem{0.81}  \\
 256&100k &8&3.3k\sem{12\;\;}   &0.6M\sem{41.2k}&122k\sem{6.5k} &30.6M\sem{2.0M}  &99.6\sem{0.0}&155\sem{11.1} &{\bfseries 115.11}\sem{8.88} \\
 512&100k &16&3.3k\sem{1.2\;}    &0.6M\sem{22.8k}&112k\sem{6.9k} &32.9M\sem{1.3M}  &99.5\sem{0.0}&87.3\sem{4.43}   &{\bfseries 203.05}\sem{9.82} \\
1024&100k &32&3.2k\sem{0.3\;}    &0.6M\sem{49.1k}&119k\sem{13k\;}&43.0M\sem{3.2M}  &99.2\sem{0.0}&53.7\sem{4.69}   &{\bfseries 332.97}\sem{26.8}\\
        \bottomrule
    \end{tabular}
    \caption{Statistics measured for PNS-PDFPN with the child-ordering heuristic after applying the optimal parameter settings. The measured values are the number of nodes in the master tree (M-Nodes), the average number of nodes in workers' transposition tables (W-Nodes), the total number of computed Grundy numbers (T-GN), the total number of iterations (T-It), the worker utilization (Util), the computation time (Time), and the measured speedup.
    We also include the value of the parameters \texttt{iterations} (It) and \texttt{threads} (Th) and measurement for DFPN($5 \cdot 10^7$).
    \texttt{Grouping} set to its maximum and \texttt{updates} to 1,000 are optimal across all settings.
    Workers' transposition table capacity is $5 \cdot 10^7$.
    Measured on the 47-spot position.}
    \label{tab:final_full_values}
\end{table*}

\end{document}